\def\W{\mathbf{W}}
\def\x{\mathbf{x}}
\def\y{\mathbf{y}}
\def\h{\mathbf{h}}
\def\R{\mathbb{R}}
\def\C{\mathbf{C}}
\def\bzeta{\boldsymbol{\zeta}}
\def\lC{\overline{\mathbf{C}}}
\def\ld{\overline{\mathbf{d}}}
\def\X{\mathcal{X}}
\def\Y{\mathcal{Y}}
\def\bmu{\boldsymbol{\mu}}
\def\bSigma{\boldsymbol{\Sigma}}
\def\N{\mathcal{N}}
\def\diag{\mathrm{diag}}
\def\z{\mathbf{z}}
\def\1{\mathbbm{1}}
\newtheorem{definition}{Definition}
\newtheorem{lemma}{Lemma}
\newtheorem{proposition}{Proposition}
\begin{document}

\title{ReLU Networks as Random Functions: Their Distribution in Probability Space}

\author{Shreyas Chaudhari \and Jos\'e M. F. Moura
}
\date{
Carnegie Mellon University\\
Electrical and Computer Engineering}

\maketitle


\begin{abstract}
This paper presents a novel framework for understanding trained ReLU networks as random, affine functions, where the randomness is induced by the distribution over the inputs. By characterizing the probability distribution of the network’s activation patterns, we derive the discrete probability distribution over the affine functions realizable by the network. We extend this analysis to describe the probability distribution of the network’s outputs. Our approach provides explicit, numerically tractable expressions for these distributions in terms of Gaussian orthant probabilities. Additionally, we develop approximation techniques to identify the support of affine functions a trained ReLU network can realize for a given distribution of inputs. Our work provides a framework for understanding the behavior and performance of ReLU networks corresponding to stochastic inputs, paving the way for more interpretable and reliable models.
\end{abstract}

\section{Introduction}
Neural networks have become foundational to modern machine learning. Their ability to learn from copious amounts of data and effectively model complex, non-linear relationships has enabled them to outperform classical methods in many application domains spanning computer vision, natural language processing, and medicine \cite{lecun2015deep}. The rapid advancements and widespread adoption of deep learning have motivated extensive research into the fundamental aspects of neural networks, including their expressive power \cite{cohen2016expressive}, training dynamics \cite{jacot2018neural}, and feature representations \cite{kansizoglou2021deep}. Several works have also studied uncertainty quantification in neural networks using moment propagation techniques \cite{bibi2018analytic, wright2024analytic}. Nevertheless, the inner workings of deep neural networks still remain largely elusive, leading to neural networks being treated as black boxes in practice. The resulting lack of transparency poses significant challenges, particularly in safety-critical and high-stakes applications, where understanding the behavior, robustness, and decision-making process of the deployed neural network is paramount. 

We show that for the decision-making process and robustness of a prominent class of neural networks, namely those with continuous, piecewise linear activations (e.g., ReLU, Leaky ReLU), can be formulated \textit{exactly} using numerically tractable expressions. Such networks are piecewise affine functions, meaning they partition the input space into distinct regions, and behave as a consistent affine function within each region. Since the number of these regions is finite, the network can be characterized by a finite set of affine functions that it can assume over the input space. The key question we explore in this work is: 

\textit{Given that inputs to the network are randomly distributed according to some distribution, what is the probability of a ReLU network realizing a particular affine function from the set of possible functions that it can assume?}

The randomness of the inputs effectively yields a probability distribution over the affine functions the network can assume. Addressing the question proposed above is crucial for developing a deeper understanding of neural networks, especially as it pertains to their robustness and fairness. For example, such analysis is critical in high-stakes scenarios, where it is necessary to guarantee that the probability of a potentially ``dangerous'' affine function being realized is negligible. Such analysis can also help assess the likelihood that the function applied by a neural network changes when sensitive input features are randomly adjusted, ensuring that the models remain fair and stable in their behavior. Our key contributions are summarized as follows: 
\begin{itemize}
    \item We exactly derive the probability that a neural network with ReLU activations applies a given affine function over a distribution of inputs. Specifically, as neural networks with ReLU activations are piecewise affine, we derive the probability mass function (PMF) of any affine function being applied.  For the case of inputs distributed as a mixture of Gaussians, we demonstrate that this probability can be computed as the orthant probability corresponding to a Gaussian distribution.
    \item Using the insights from deriving the PMF, we prove that for inputs distributed as a mixture of Gaussians, the outputs are distributed as a mixture of \textit{truncated} Gaussians. Furthermore, we provide an exact expression for the PDF that can be computed as a sum of Gaussian orthant probabilities. 
    \item We propose a sample-free algorithm to identify the most probable activation patterns corresponding to inputs distributed as a mixture of Gaussians, thereby approximating the support of the distribution over affine functions realized by the neural network. 
\end{itemize}

\section{Background}
Let $f(\cdot)$ be a feed-forward neural network that takes as input $\x \in \R^{n_0}$ and produces output $\y \in \R^{n_{L}}$. Each layer $\ell \in \{1,2,\cdots, L\}$ of the network has $n_{\ell}$ neurons, with weights and biases respectively given by the $n_{\ell} \times n_{\ell-1}$ matrix $\W_\ell$ and the vector $\mathbf{b}_{\ell}$ of size ${n}_\ell$. We assume the network employs rectifier activations, $\mathrm{relu}(\cdot)$, that apply the element-wise transformation $\max(0,x)$ to each component of the input vector. Our framework can be readily extended to accommodate other piecewise-linear activations, such as Leaky ReLU, as detailed in \Cref{app:leakyrelu}. Denoting $\h_{\ell}$ as the latent representation computed at layer $\ell$, the network output can be described as:
\begin{subequations}
\begin{align}
\h_{1} &= \W_1 \x + \mathbf{b}_1 \label{eq:ff_nn1}\\
\h_\ell &= \W_\ell \mathrm{relu}(\h_{\ell-1}) + \mathbf{b}_\ell \label{eq:ff_nn2} \\
\y &= \h_{L} \label{eq:ff_nn3} 
\end{align}
\end{subequations}
Feed forward neural networks with ReLU activations are well known to be continuous, piecewise, affine functions \cite{montufar2014number, arora2016understanding, balestriero2018spline, hinz2021analysis}. These networks partition the input space into a collection of non-overlapping, convex polytopes, such that all inputs within the same polytope undergo the same affine transformation. 

\textit{Example}: To clearly illustrate how neural networks partition the input space into polytopes, let us consider the network in~\eqref{eq:ff_nn1}-\eqref{eq:ff_nn3} with $L=2$, meaning the network has a single ReLU nonlinearity. Applying this network on input $\x$ yields output $\y = \W_2\mathrm{relu}\left(\W_1 \x + \mathbf{b}_1\right) + \mathbf{b}_2$. We can equivalently express the output as $\y = \W_2\diag[\z](\W_1\x + \mathbf{b}_1) + \mathbf{b}_2$, where $\z$ is a binary vector that masks the negative entries of $\W_1\x + \mathbf{b}_1$. Two inputs $\x$ and $\x'$ correspond to the same mask $\z$ if and only if $\W_{1}\x + \mathbf{b}_1$ and $\W_{1}\x' + \mathbf{b}_1$ have negative components at the same indices, or equivalently, they lie in the same convex polytope defined by the conditions $(\W_{1}\x + \mathbf{b}_1)_i \leq 0$ or $> 0$ depending on whether entry $i$ of $\z$ is $0$ or $1$, respectively. Since all inputs within this polytope correspond to the same mask $\z$, the ReLU network transforms them using the same affine function.

The previous example generalizes to ReLU networks with $L>2$ layers by considering the binary masks at each layer. Let $\1(\cdot)$ be the elementwise sign function defined as
\begin{align}
    \1(\x)_i &= \begin{cases}
        1 &\text{if $x_i > 0$}\\ 
        0 &\text{otherwise}
    \end{cases}
\end{align}
and let $\z_\ell \in \{0,1\}^{n_{\ell}}$ be the binary vector that takes value 1 only at the indices where $\h_{\ell}$ is positive, i.e., $\z_{\ell} = \1(\h_{\ell})$. Thus, the vector $\z_\ell$ indicates which neurons are active at layer $\ell$. We define the activation pattern by collecting the masks $\z_{\ell}$ at each hidden layer of the network.
\begin{definition}
    The \textit{activation pattern} of the network is defined as $\bzeta = \left[\z_{1}^\top, \dots, \z_{L-1}^\top\right]^\top \in \{0,1\}^{\sum_{\ell=1}^{L-1}n_\ell}$ where each $\z_\ell = \1(\h_\ell)$.
\end{definition}
It can be shown that deep ReLU networks also partition a bounded input space into disjoint convex polytopes \cite{chu2018exact, hanin2019deep}, and within each polytope, all inputs correspond to the same activation pattern $\bzeta$. As a result, inputs from the same polytope are transformed by the same affine function.

\section{ReLU Networks as Random Functions}
\label{sec:relu_networks}
When the inputs $\x$ are random, the activation pattern $\bzeta$ becomes a discrete random variable and induces a discrete distribution over the affine functions realizable by the ReLU network. In \Cref{subsec:affine_distr}, we explicitly characterize the probability mass function (PMF) of $\bzeta$, thereby deriving the distribution over the affine functions associated with each activation pattern. In \Cref{subsec:output_distr}, we then describe how this translates to characterizing the output distribution of the network. We denote a realization of the random variable $\bzeta$ by $\bzeta' \in \{0,1\}^{\sum_{\ell=1}^{L-1}n_\ell}$. The inputs $\x$ are assumed to be distributed as a mixture of multivariate Gaussians with probability density
\begin{align}
    p_{\x}(\x) &= \sum_{k=1}^K \alpha_k \N(\x; \bmu_k, \bSigma_k) \label{eq:x0_gmm}
\end{align}
where $\N(\x; \bmu_k, \bSigma_k)$ is the multivariate normal density with mean $\bmu_k$ and positive definite covariance $\bSigma_k$, and the mixing weights satisfy $\alpha_k > 0, \sum_{k=1}^K \alpha_k = 1$. Gaussian mixture models are universal approximators of probability densities making this setting indeed quite general \cite{ lo1972finite, Goodfellow-et-al-2016}. Additionally, this framework accommodates the non-parametric approach of kernel density estimation using Gaussian kernels.

\subsection{Function distribution}
\label{subsec:affine_distr}
As discussed, each realized activation pattern $\bzeta'$ uniquely identifies a convex polytope in the input space. Therefore, one approach for determining the probability $P(\bzeta = \bzeta')$ is to compute the probability that the input to the network $\x$ lies within the polytope corresponding to $\bzeta'$. Let $\C_{1} = \W_{1}$ and $\mathbf{d}_{1} = \mathbf{b}_{1}$ and, for a given activation pattern $\bzeta' = \left[\z_{1}^\top, \dots, \z_{L-1}^\top\right]^\top$, we recursively define:
\begin{align}
    \C_{\ell} &= \W_{\ell}\diag[\z_{\ell-1}]\C_{\ell-1} \label{eq:C}\\
    \mathbf{d}_{\ell}&=  \W_{\ell}\diag[\z_{\ell-1}]\mathbf{d}_{\ell-1} + \mathbf{b}_{\ell} \label{eq:d}
\end{align}
Hence, if input $\x$ induces activation pattern $\bzeta'$, the network computes intermediate representations as $\h_\ell = \C_\ell\x + \mathbf{d}_\ell$ and output $f(\x) = \h_L = \C_L\x + \mathbf{d}_L$. We collect the matrices $\C_{1},\dots,\C_{\ell}$ and vectors $\mathbf{d}_1,\dots,\mathbf{d}_{\ell}$ into the following block matrix and vector respectively:
\begin{align}
    \lC^{(\ell)}_{\bzeta'} = \begin{bmatrix}
        \C_{1}\\
        \C_{2}\\
        \vdots\\
        \C_{\ell}
    \end{bmatrix},\;\;
    \ld^{(\ell)}_{\bzeta'} = \begin{bmatrix}
        \mathbf{d}_{1}\\
        \mathbf{d}_{2}\\
        \vdots\\
        \mathbf{d}_{\ell}
    \end{bmatrix}
    \label{eq:Cd}
\end{align}

Using the block matrix and vector in~\eqref{eq:Cd} above, we can give a form for the convex polytope containing all inputs $x \in \R^{n_0}$ that are transformed by the same affine function. 

\begin{lemma}
    An input $\x$ induces activation pattern $\bzeta'$ if and only if it is in the convex polytope:
    \begin{align}
        \mathcal{K}(\bzeta')\!&=\!\left\{\!\x\!\in\!\R^{n_0}\!:\!\begin{cases}
            \left[\lC_{\bzeta'}^{(L-1)}\x - \ld_{\bzeta'}^{(L-1)}\right]_i\!\leq\!0 \text{ if } \zeta'_i\!=\!0\\[5pt]
             \left[\lC_{\bzeta'}^{(L-1)}\x - \ld_{\bzeta'}^{(L-1)}\right]_i\!>\!0 \text{ if } \zeta'_i\!=\!1
        \end{cases}\hspace{-1em}\right\}
        \label{eq:polytope}
    \end{align}
    \label{lemma:polytope}
\end{lemma}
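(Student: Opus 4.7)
The plan is to proceed by induction on the layer index $\ell$, tracking how the recursively defined matrices $\C_\ell$ and vectors $\mathbf{d}_\ell$ match the true pre-activations $\h_\ell$ when the activation pattern is consistent. The central claim I would establish first is: \emph{if the input $\x$ induces the partial activation pattern $(\z_1, \dots, \z_{\ell-1})$ at layers $1, \dots, \ell-1$, then $\h_\ell = \C_\ell \x + \mathbf{d}_\ell$.}

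For the base case $\ell = 1$, this is immediate from $\C_1 = \W_1$, $\mathbf{d}_1 = \mathbf{b}_1$, and $\h_1 = \W_1\x + \mathbf{b}_1$. For the inductive step, the key observation is that $\mathrm{relu}(\h_{\ell-1}) = \diag[\z_{\ell-1}]\h_{\ell-1}$ precisely when $\z_{\ell-1} = \1(\h_{\ell-1})$, i.e., when the mask records the positive entries of $\h_{\ell-1}$. Plugging the inductive hypothesis $\h_{\ell-1} = \C_{\ell-1}\x + \mathbf{d}_{\ell-1}$ into~\eqref{eq:ff_nn2} then yields $\h_\ell = \W_\ell \diag[\z_{\ell-1}](\C_{\ell-1}\x + \mathbf{d}_{\ell-1}) + \mathbf{b}_\ell$, which matches the recursion~\eqref{eq:C}--\eqref{eq:d} exactly.

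With the claim in hand, both directions of the ``iff'' fall out. For the forward direction, if $\x$ induces the full pattern $\bzeta'$, then $\h_\ell = \C_\ell\x + \mathbf{d}_\ell$ at every hidden layer, and the defining property $\z_\ell = \1(\h_\ell)$ translates entrywise into the sign constraints that cut out $\mathcal{K}(\bzeta')$; stacking the $L-1$ layer conditions gives the block-matrix formulation via $\lC^{(L-1)}_{\bzeta'}$ and $\ld^{(L-1)}_{\bzeta'}$. For the converse, I would run the same induction once more: if $\x \in \mathcal{K}(\bzeta')$, the block inequalities at the rows corresponding to layer $\ell-1$ force $\1(\C_{\ell-1}\x + \mathbf{d}_{\ell-1}) = \z_{\ell-1}$, which is exactly the hypothesis needed to extend the claim to layer $\ell$, so inductively the forward pass on $\x$ reproduces the full pattern $\bzeta'$. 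Convexity is then automatic, since $\mathcal{K}(\bzeta')$ is an intersection of finitely many closed and open half-spaces.

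The main obstacle is the mild circularity inherent to the statement: the matrices $\C_\ell$ and vectors $\mathbf{d}_\ell$ themselves depend on the candidate pattern $\bzeta'$, so $\mathcal{K}(\bzeta')$ is defined in terms of objects that only equal the true pre-activations \emph{conditionally} on $\bzeta'$ being induced. The induction must be threaded carefully so that the layerwise sign constraints both (i) follow from $\bzeta'$ being the true pattern and (ii) suffice to guarantee that $\bzeta'$ is indeed the true pattern. Once that consistency is handled, no further computation is needed.
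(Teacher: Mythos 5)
Your proof is correct, and it is substantially more careful than the paper's own argument. The paper asserts that the lemma ``follows immediately'' from the definition of $\mathcal{K}(\bzeta')$ and, for the converse, claims that an input outside the polytope realizes the pattern obtained by flipping the bits of $\bzeta'$ at the violated indices. That assertion glosses over exactly the circularity you identify: the rows of $\lC^{(L-1)}_{\bzeta'}$ corresponding to layer $\ell$ only compute the true pre-activation $\h_\ell$ if the pattern $\bzeta'$ has actually been realized at layers $1,\dots,\ell-1$, so once a constraint is violated at an early layer the later constraints no longer describe the actual forward pass, and the ``bit-flipping'' description of the induced pattern is not justified. Your layerwise induction --- establishing $\h_\ell = \C_\ell\x + \mathbf{d}_\ell$ conditionally on the partial pattern $(\z_1,\dots,\z_{\ell-1})$, then running the induction in both directions so that the sign constraints at layer $\ell$ are always statements about the true $\h_\ell$ --- is the right way to close this gap and is the argument the paper implicitly relies on. Two minor remarks: first, the identity $\mathrm{relu}(\h)=\diag[\z]\h$ holds \emph{whenever} $\z=\1(\h)$ (the ``precisely when'' is slightly stronger than needed or true, since entries with $h_i=0$ are insensitive to the mask, but only the forward implication is used); second, the polytope in \eqref{eq:polytope} is written with $\lC^{(L-1)}_{\bzeta'}\x - \ld^{(L-1)}_{\bzeta'}$ whereas the recursion and the proof of \Cref{prop:pmf} use $\lC^{(L-1)}_{\bzeta'}\x + \ld^{(L-1)}_{\bzeta'}$; your constraints use the latter, correct sign, silently fixing what appears to be a typo in the statement.
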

The inequality constraints in~\eqref{eq:polytope} ensure that the sign of each hidden representation $\h_{\ell}$ correctly identify the activation pattern $\bzeta'$. The proof of \Cref{lemma:polytope} follows immediately from~\eqref{eq:polytope} since, if $\x \in \mathcal{K}(\bzeta')$ then it satisfies all linear inequality constraints that guarantee activation pattern $\bzeta'$ is induced. If $\x \notin \mathcal{K}(\bzeta')$, then there must exist a non-empty set of indices $\mathcal{I} = \{1, 2, \dots, \sum_{\ell=1}^{L-1} n_{\ell}\}$ at which the linear inequality constraints are not satisfied, and $\x$ then corresponds to the activation pattern defined by flipping the bits of $\bzeta'$ at indices $\mathcal{I}$. 

By \Cref{lemma:polytope}, an activation pattern $\bzeta'$ uniquely identifies an activation pattern in the input space. Thus, the PMF of $\bzeta$ can be expressed as:
\begin{align}
    P(\bzeta = \bzeta') &= \int_{\mathcal{K}(\bzeta')} p_{\x}(\x) d\x
    \label{eq:pmf_convex_polytope}
\end{align}
The challenge in numerically computing the integral in~\eqref{eq:pmf_convex_polytope} is that region of integration is non-rectangular. Hence, computation typically requires Markov Chain Monte Carlo (MCMC), methods, such as those described in \cite{belisle1993hit, chen2018fast}, to sample over the polytope. We show that it is possible to express~\eqref{eq:pmf_convex_polytope} as a Gaussian integral over a rectangular region, namely an orthant, thereby simplifying computation.

\begin{definition}
    The orthant associated with binary vector $\bzeta \in \{0,1\}^n$ is defined as $\mathcal{O}(\bzeta) = \{\x \in \R^n : \1(\x) = \bzeta\}$.
\end{definition}

The orthant associated with a binary vector $\bzeta$ contains vectors that are positive at indices where $\bzeta$ is 1. The next result demonstrates how the integral in~\eqref{eq:pmf_convex_polytope} can be transformed to a Gaussian integral over an orthant.

\begin{proposition}
    Let $\x \in \R^{n_0}$ be distributed as a mixture of Gaussians, with probability density in~\eqref{eq:x0_gmm} and let:
    \begin{align}
    \widetilde{\bmu}_{k,\bzeta'} &= \lC_{\bzeta'}^{(L-1)}\bmu_k + \ld_{\bzeta'}^{(L-1)}\\
    \widetilde{\bSigma}_{k,\bzeta'} &= \lC_{\bzeta'}^{(L-1)}\bSigma_k{\lC_{\bzeta'}^{(L-1)}}^\top
    \end{align}
    If $\widetilde{\bSigma}_{k,\bzeta'}$ is invertible for all $k$, then:
    \begin{align}
        P\left(\bzeta = \bzeta'\right) = \sum_{k=1}^K \alpha_k \int_{\mathcal{O}(\bzeta')} \N(\x; \widetilde{\bmu}_{k, \bzeta'}, \widetilde{\bSigma}_{k, \bzeta'}) d\x
        \label{eq:pmf}
    \end{align}
    \label{prop:pmf}
\end{proposition}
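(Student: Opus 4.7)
\emph{Proof plan.} The plan is to recast the polytope integral in~\eqref{eq:pmf_convex_polytope} as an orthant integral through the affine change of variables $\y = \lC_{\bzeta'}^{(L-1)}\x + \ld_{\bzeta'}^{(L-1)}$. Splitting the mixture density by linearity reduces the task to one Gaussian component, after which the key fact is that the pushforward of a Gaussian by an affine map is itself Gaussian, with parameters matching $\widetilde{\bmu}_{k,\bzeta'}$ and $\widetilde{\bSigma}_{k,\bzeta'}$ exactly.

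First, I would substitute~\eqref{eq:x0_gmm} into~\eqref{eq:pmf_convex_polytope} and exchange the finite sum with the integral, reducing the proposition to showing, for each mixture component $k$, that
\begin{equation*}
\int_{\mathcal{K}(\bzeta')} \N(\x; \bmu_k, \bSigma_k)\,d\x \;=\; \int_{\mathcal{O}(\bzeta')} \N(\x; \widetilde{\bmu}_{k,\bzeta'}, \widetilde{\bSigma}_{k,\bzeta'})\,d\x.
\end{equation*}
By \Cref{lemma:polytope}, the polytope $\mathcal{K}(\bzeta')$ agrees, up to its measure-zero boundary, with the preimage of the open orthant $\mathcal{O}(\bzeta')$ under the affine map $\x \mapsto \lC_{\bzeta'}^{(L-1)}\x + \ld_{\bzeta'}^{(L-1)}$; indeed, the sign constraints defining $\mathcal{K}(\bzeta')$ are precisely those defining $\mathcal{O}(\bzeta')$ after applying this transformation. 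Therefore the left-hand side equals $P(\y \in \mathcal{O}(\bzeta'))$ for $\y := \lC_{\bzeta'}^{(L-1)}\x + \ld_{\bzeta'}^{(L-1)}$ with $\x \sim \N(\bmu_k, \bSigma_k)$. As an affine image of a Gaussian, $\y$ is Gaussian with mean $\widetilde{\bmu}_{k,\bzeta'}$ and covariance $\widetilde{\bSigma}_{k,\bzeta'}$, and the hypothesis that $\widetilde{\bSigma}_{k,\bzeta'}$ is invertible ensures this Gaussian admits the Lebesgue density $\N(\,\cdot\,;\widetilde{\bmu}_{k,\bzeta'}, \widetilde{\bSigma}_{k,\bzeta'})$, so $P(\y \in \mathcal{O}(\bzeta'))$ is the desired orthant integral.

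The main conceptual hurdle is the invertibility assumption on $\widetilde{\bSigma}_{k,\bzeta'}$. The block matrix $\lC_{\bzeta'}^{(L-1)}$ has $M := \sum_{\ell=1}^{L-1} n_\ell$ rows mapping from $\R^{n_0}$, so $\widetilde{\bSigma}_{k,\bzeta'} = \lC_{\bzeta'}^{(L-1)}\bSigma_k(\lC_{\bzeta'}^{(L-1)})^\top$ has rank at most $\min(M, n_0)$ and is invertible only when $M \leq n_0$ and $\lC_{\bzeta'}^{(L-1)}$ has full row rank. Under this hypothesis the change-of-variables reduction is immediate; otherwise $\y$ would be supported on a proper affine subspace of $\R^M$ and the orthant integral would need to be interpreted with respect to the induced lower-dimensional density. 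The remaining bookkeeping, namely reconciling the strict versus non-strict inequalities between \Cref{lemma:polytope} and the open orthant $\mathcal{O}(\bzeta')$, contributes no probability mass under any non-degenerate Gaussian and is routine.
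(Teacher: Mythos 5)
Your proposal is correct and follows essentially the same route as the paper's proof: both reduce the event $\{\bzeta = \bzeta'\}$ to the event that the stacked pre-activations $\lC_{\bzeta'}^{(L-1)}\x + \ld_{\bzeta'}^{(L-1)}$ lie in the orthant $\mathcal{O}(\bzeta')$, and then use the fact that the affine image of a Gaussian mixture is a Gaussian mixture with exactly the parameters $\widetilde{\bmu}_{k,\bzeta'}$ and $\widetilde{\bSigma}_{k,\bzeta'}$. Your additional remarks on when the invertibility hypothesis can hold (requiring $\sum_{\ell=1}^{L-1} n_\ell \leq n_0$ and full row rank of $\lC_{\bzeta'}^{(L-1)}$) and on the negligible boundary are sound and consistent with the paper's own discussion following the proposition.
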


\begin{proof}
The probability $P(\bzeta = \bzeta')$ can be expressed in terms of the latent representations as:
    \begin{align*}
    P(\bzeta = \bzeta') &= P[(\h_{L-1} \in \mathcal{O}(\z_{L-1})) \cap (\h_{L-2} \in \mathcal{O}(\z_{L-2})) \cap \dots \cap (\h_{1} \in \mathcal{O}(\z_{1}))]\\
    &= P\left[\lC^{(L-1)}_{\bzeta'}\x + \ld^{(L-1)}_{\bzeta'} \in \mathcal{O}(\bzeta')\right]
\end{align*}
Since $\x$ is distributed as a mixture of Gaussians with density~\eqref{eq:x0_gmm}, $\lC^{(L-1)}_{\bzeta'}\x + \ld^{(L-1)}_{\bzeta'}$ is distributed as a mixture of Gaussians. The density for $\lC^{(L-1)}_{\bzeta'}\x + \ld^{(L-1)}_{\bzeta'}$ is then given by $\sum_{k=1}^K \alpha_k \N(\x; \bmu_{k,\bzeta'}, \bSigma_{k,\bzeta'})$
where:
\begin{align*}
    \bmu_{k,\bzeta'} &= \lC^{(L-1)}_{\bzeta'}\bmu_k + \ld^{(L-1)}_{\bzeta'}\\
    \bSigma_{k,\bzeta'} &= \lC^{(L-1)}_{\bzeta'}\bSigma_k{\lC^{(L-1)}_{\bzeta'}}^\top
\end{align*}
Computing $P\left[\lC^{(L-1)}_{\bzeta'}\x + \ld^{(L-1)}_{\bzeta'} \in \mathcal{O}(\bzeta')\right]$ is then equivalent to integrating the density of $\lC_{\bzeta'}^{(L-1)}\x + \ld^{(L-1)}_{\bzeta'}$ over the region $\mathcal{O}(\bzeta')$.
\end{proof}

The proof of \Cref{prop:pmf} rewrites the integral in~\eqref{eq:pmf_convex_polytope} as the probability of a set of jointly Gaussian variables. We make a few remarks regarding the result. First,~\eqref{eq:pmf} shows that the PMF of $\bzeta$ can be expressed as a sum of Gaussian probabilities over orthants, which have been extensively studied \cite{Stack1962, owen2004orthant, genz2009computation, ridgway2016computation, azzimonti2018estimating}. Second, the number of variables in the integral is $\sum_{\ell=1}^{L-1} n_\ell$, where we recall $n_{\ell}$ is the number of neurons at layer $\ell$ in the network. Thus, the integral in~\eqref{eq:pmf} has fewer variables than that in~\eqref{eq:pmf_convex_polytope} when the total number of hidden neurons is less than the dimensionality of the input. When  $\sum_{\ell=1}^{L-1} n_\ell$ is larger than the input dimensionality, the resulting covariances $\lC_{\bzeta_i}\bSigma_k\lC_{\bzeta_i}^\top$ are rank deficient and a suitable transformation can be applied, such as the generalized Cholesky factor or the eigenvectors of the covariance matrix corresponding to its nonzero eigenvalues \cite{rao1973linear, genz2009computation}. Furthermore, we observe that in such scenarios, diagonal approximation of the covariances performs reasonably well in practice. 

\subsection{Output distribution}
\label{subsec:output_distr}
Our method for deriving the PMF of $\bzeta$ in \Cref{prop:pmf} can also be used to characterize the distribution over the neural network outputs. In order to determine the probability density $p_{\y}(\y)$ of the output vector $\y = f(\x)$, we can consider marginalizing over the joint distribution of neural network inputs and outputs
\begin{align}
    p_{\y}(\y) &= \int_{\x} p_{\x,\y}(\x,\y) d\x
\end{align}
and leverage the fact that $f(\cdot)$ is a piecewise affine function to rewrite the integral. Let $\mathrm{supp}(\bzeta) = \{\bzeta' : P(\bzeta = \bzeta') > 0\}$ denote the support of the random activation pattern $\bzeta$, and hence $|\mathrm{supp}(\bzeta)|$ is equivalent to the number of convex polytope partitions in the input space that are occupied by the inputs with nonzero probability. The regions $\mathcal{K}(\bzeta')$ are disjoint hence the density of $\y$ can be expressed as follows:
\begin{align}
    p_\y(\y) &= \sum_{\bzeta' \in \mathrm{supp}(\bzeta)} \int_{\mathcal{K}(\bzeta')} p_{\x, \y}(\x, \y) d\x \label{eq:pdfy_polytope}
\end{align}
By expanding the rightmost term in~\eqref{eq:pdfy_polytope} using Bayes' theorem, we obtain the following result.
\begin{proposition}
    Let $\x$ be distributed as a mixture of Gaussians, and let $f(\x)$ be piecewise affine. Then $\y = f(\x)$ is distributed as a mixture of \underline{truncated} Gaussians.
    \label{prop:trunc_gaussians}
\end{proposition}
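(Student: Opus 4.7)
The plan is to begin from~\eqref{eq:pdfy_polytope}, which already decomposes $p_\y(\y)$ as a finite sum of integrals over the input polytopes $\mathcal{K}(\bzeta')$, then exploit the fact that on each such polytope the network reduces to a single deterministic affine map $\y = \C_L\x + \mathbf{d}_L$, with $\C_L$ and $\mathbf{d}_L$ built from $\bzeta'$ via the recursion~\eqref{eq:C}-\eqref{eq:d}. The deterministic relation lets me write $p_{\x,\y}(\x,\y) = \delta(\y - \C_L\x - \mathbf{d}_L)\,p_\x(\x)$ inside each integrand.

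With $p_\x$ expanded as the Gaussian mixture~\eqref{eq:x0_gmm} and the finite sums interchanged, $p_\y(\y)$ becomes a double sum indexed by $\bzeta' \in \mathrm{supp}(\bzeta)$ and $k \in \{1,\dots,K\}$, whose generic term is
\[
\alpha_k \int_{\mathcal{K}(\bzeta')} \delta\bigl(\y - \C_L\x - \mathbf{d}_L\bigr)\,\N(\x;\bmu_k,\bSigma_k)\,d\x .
\]
I would next identify this term as (a scalar multiple of) the density of the pushforward, under the affine map $\x \mapsto \C_L\x + \mathbf{d}_L$, of the component Gaussian $\N(\bmu_k,\bSigma_k)$ restricted to the convex polytope $\mathcal{K}(\bzeta')$. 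In the generic case when $\C_L$ is square and invertible, a direct change of variables turns this into $|\det \C_L|^{-1}\,\N(\y;\,\C_L\bmu_k+\mathbf{d}_L,\,\C_L\bSigma_k{\C_L}^{\top})$ supported on the image polytope $\C_L\mathcal{K}(\bzeta')+\mathbf{d}_L$, which is precisely a Gaussian truncated to that polytope (up to the orthant-probability normalization computable via \Cref{prop:pmf}). Collecting the normalized terms with weights $\alpha_k\,P(\bzeta=\bzeta'\mid k)$ then exhibits $p_\y$ as a convex combination of truncated Gaussians.

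The main obstacle is handling the case when $\C_L$ is not square or is rank-deficient, which is typical in classifiers or with wide intermediate layers. When $n_L < n_0$ the delta integral must be resolved by marginalizing out the $(n_0-n_L)$-dimensional preimage of $\y$; a standard Gaussian marginalization argument shows the resulting density is again Gaussian in $\y$ and truncated to the image of $\mathcal{K}(\bzeta')$. When the rank of $\C_L$ is strictly less than $n_L$, the output distribution is supported on a proper affine subspace of $\R^{n_L}$ and the truncated Gaussian must be interpreted as a singular measure on that subspace (for instance by parameterizing the subspace via a basis of $\mathrm{range}(\C_L)$ and working in those coordinates). In every case the pushforward of a truncated Gaussian by an affine map is again a truncated Gaussian in the appropriate (possibly degenerate) sense, which together with the piecewise-affine structure of $f$ establishes the mixture representation claimed in the proposition.
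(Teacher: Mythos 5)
Your argument is essentially the paper's own proof: both decompose $p_\y$ over the affine regions, identify each term as the affine pushforward of a Gaussian mixture component restricted to the corresponding polytope, and recognize the result as a Gaussian truncated to the image region --- the paper simply phrases this via $p(\y \mid \x \in \X_i)\,P(\x \in \X_i)$ rather than a delta-function kernel and change of variables. Your handling of the non-square and rank-deficient $\C_L$ cases is somewhat more explicit than the paper's single closing remark about the degenerate Gaussian density, but the substance of the argument is the same.
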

\begin{proof}
We consider marginalizing the joint distribution of inputs and outputs as follows:
\begin{align*}
    p_{\y}({\y}) &= \int_{\x} p_{\x,{\y}}(\x,\y) d\x
\end{align*}
$f(\cdot)$ is a piecewise affine function. Let $\X_i,\;i\in\{1,2,\dots,M\}$ be the regions on which $f$ behaves as a consistent affine function. The regions $\X_i$ satisfy $\X_i \cap \X_j = \emptyset, i \neq j$ and $\cup_{i} \X_i = \R^{n}$. Hence we can express the integral above as: 
\begin{align*}
    p_{\y}(\y) &= \int_{\x} p_{\x,\y}(\x,\y) d\x\\
    &= \sum_{i=1}^M \int_{\X_i} p_{\x,\y}(\x,\y) d\x\\
    &= \sum_{i=1}^M p(\y | \x \in \X_i) P(\x \in \X_i) \label{eq:py_trunc_mix}
\end{align*}
The network output can be written as $\y = \C_i \x + \mathbf{d}_i,\;\forall \x \in \mathcal{X}_i$. The distribution of $\y$ conditioned on $\x \in \X_i$ therefore has the following density:
\begin{align}
    p(\y | \x \in \X_i) &= p(\mathbf{C}_i \x + \mathbf{d}_i | \x \in \X_i)\\
    &= \begin{cases}
        \frac{1}{Z}\sum_{k=1}^K \alpha_k \N(\y; \mathbf{C}_i \bmu_k + \mathbf{d}_i, \mathbf{C}_i \bSigma_{k} \mathbf{C}_i^\top) \;\;\text{ if $\y \in \Y_i$} \\
        0 \text{ otherwise}        
    \end{cases} \label{eq:trunc_gmm}
\end{align}
where $\Y_i$ is the image of the region $\X_i$ under the affine transformation $\mathbf{C}_i \x + \mathbf{d}_i$ and the normalization constant $Z$ is given by $Z = \int_{\y \in \Y_i} \sum_{k=1}^K \alpha_k \N(\y; \mathbf{C}_i \bmu_k + \mathbf{d}_i, \mathbf{C}_i \bSigma_{k} \mathbf{C}_i^\top) d\y$. We can equivalently express~\eqref{eq:trunc_gmm} as:
\begin{align}
    p(\y | \x \in \X_i) &= \frac{1}{Z}\sum_{k=1}^K\alpha_k\mathcal{T}(\y) \label{eq:cond_prob}\\
    \mathcal{T}(\y) &= \begin{cases}
        \N(\y; \mathbf{C}_i \bmu_k + \mathbf{d}_i, \mathbf{C}_i \bSigma_{k} \mathbf{C}_i^\top) \;\;\text{ if $\y \in \Y_i$} \\
        0 \text{ otherwise}        
    \end{cases}
\end{align}
Substituting~\eqref{eq:cond_prob} back into~\eqref{eq:py_trunc_mix} shows that $p_{\y}(\y)$ is the density of the mixture of $MK$ truncated Gaussians. In the case where $\mathbf{C}_i \bSigma_k {\mathbf{C}_i}^\top$ is not invertible, the density of $\y$ exists on a subspace and can be expressed in terms of the degenerate Gaussian density \cite{rao1973linear} that uses the pseudoinverse and pseudodeterminant of the covariance.
\end{proof}

A common assumption in works related to neural network uncertainty propagation is that the intermediate representations, $\h_\ell$, obey a Gaussian distribution \cite{abdelaziz2015uncertainty, gast2018lightweight}. However, \Cref{prop:trunc_gaussians} demonstrates that for neural networks with piecewise linear activations and arbitrary weight and bias parameters, the intermediate representations are actually distributed as a mixture of truncated Gaussians. As in \Cref{prop:pmf}, we can transform the expression for $p_\y(\y)$ in~\eqref{eq:pdfy_polytope} into a sum of Gaussian integrals over orthants.

\begin{proposition}
Let $\x$ be distributed as a mixture of Gaussians, with probability density in~\eqref{eq:x0_gmm} and let:
\begin{align}
\widetilde{\bmu}_{k,\bzeta'} &= \lC_{\bzeta'}^{(L)}\bmu_k + \ld_{\bzeta'}^{(L)}\\
\widetilde{\bSigma}_{k,\bzeta'} &= \lC_{\bzeta'}^{(L)}\bSigma_k{\lC_{\bzeta'}^{(L)}}^\top
\end{align}
If $\widetilde{\bSigma}_{k,\bzeta'}$ is invertible for all $k$, then:
\begin{align}
    p_{\y}(\y) &= \sum_{\bzeta' \in \mathrm{supp}(\bzeta)}\sum_{k=1}^K \alpha_k \int_{\mathcal{O}(\bzeta')} \N\left(\begin{bmatrix}
        \x\\
        \y
    \end{bmatrix}; \widetilde{\bmu}_{k, \bzeta'}, \widetilde{\bSigma}_{k, \bzeta'}\right) d\x
    \label{eq:pdf_y}
\end{align}
\label{prop:pdf}
\end{proposition}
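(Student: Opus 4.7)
The plan is to parallel the derivation of \Cref{prop:pmf}, but applied to the linear map $\lC^{(L)}_{\bzeta'}$ that has been \emph{augmented} to include the output layer. Starting point is~\eqref{eq:pdfy_polytope}, which decomposes $p_\y(\y)$ as a sum of contributions from the polytopes $\mathcal{K}(\bzeta')$ in the support of $\bzeta$. I would show that each contribution is exactly a Gaussian-mixture orthant integral of the form appearing in~\eqref{eq:pdf_y}.

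Fix an activation pattern $\bzeta'$. On $\mathcal{K}(\bzeta')$, each intermediate representation is the affine function $\h_\ell = \C_\ell \x + \mathbf{d}_\ell$, and the output $\y = \h_L = \C_L \x + \mathbf{d}_L$ is itself affine. Stacking these maps gives the augmented vector
\begin{align*}
\begin{bmatrix} \h_1^\top & \cdots & \h_{L-1}^\top & \y^\top \end{bmatrix}^\top = \lC^{(L)}_{\bzeta'}\x + \ld^{(L)}_{\bzeta'},
\end{align*}
which, since $\x$ is a GMM, is itself a mixture of Gaussians with means $\widetilde{\bmu}_{k,\bzeta'}$ and covariances $\widetilde{\bSigma}_{k,\bzeta'}$ exactly as in the statement. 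By \Cref{lemma:polytope}, the event $\{\x \in \mathcal{K}(\bzeta')\}$ is equivalent to the event that the hidden-representation block $(\h_1,\ldots,\h_{L-1})$ lies in the orthant $\mathcal{O}(\bzeta')$. Consequently, the integrand $p_{\x,\y}(\x,\y)$ restricted to $\mathcal{K}(\bzeta')$ can be replaced by the joint density of the augmented vector at $\y$, and integration over $\mathcal{K}(\bzeta')$ in the $\x$-variable becomes integration over the latent block in $\mathcal{O}(\bzeta')$. Writing out the GMM density yields precisely the summand of~\eqref{eq:pdf_y}, and summing over $\bzeta'\in\mathrm{supp}(\bzeta)$ completes the proof (where the symbol $\x$ inside the integral in~\eqref{eq:pdf_y} is reused as the dummy integration variable for the stacked latent block).

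The main obstacle is conceptual rather than computational: the joint density $p_{\x,\y}(\x,\y)$ is singular, supported on the graph of $f$ within each polytope, so one cannot directly substitute a Gaussian density in place of it. The augmentation trick circumvents this by replacing the degenerate pair $(\x,\y)$ with the pair (latent block, $\y$), which on each polytope is a genuine linear image of $\x$ and, under the invertibility hypothesis on $\widetilde{\bSigma}_{k,\bzeta'}$, admits a bona fide GMM density that can be marginalized over the orthant. A secondary care point, analogous to the remark following \Cref{prop:pmf}, arises when $\sum_{\ell=1}^{L} n_\ell$ exceeds the input dimension $n_0$: the augmented covariance is rank-deficient, and the argument must then be phrased in terms of degenerate Gaussian densities on the appropriate subspace using pseudoinverses and pseudodeterminants.
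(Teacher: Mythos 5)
Your proposal is correct and follows essentially the same route as the paper: decompose over the polytopes in $\mathrm{supp}(\bzeta)$, use \Cref{lemma:polytope} to convert the event $\{\x \in \mathcal{K}(\bzeta')\}$ into an orthant condition on the latent block, and exploit the fact that the augmented vector $\lC^{(L)}_{\bzeta'}\x + \ld^{(L)}_{\bzeta'}$ is a Gaussian mixture with the stated parameters. The only presentational difference is that the paper works with the CDF $P(\y < \boldsymbol{\phi})$ and recovers the density by differentiation, whereas you identify the sub-probability densities directly; your explicit handling of the singularity of $p_{\x,\y}$ via the augmentation is a fair reading of what the paper's CDF detour accomplishes implicitly.
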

\begin{proof}
    Starting again with the joint distribution over the inputs and outputs, we can consider the CDF of $\y = f(\x)$ as:
\begin{align*}
   P(\y < \boldsymbol{\phi}) &=  \int_{-\boldsymbol{\infty}}^{\boldsymbol{\phi}}\int_{\x} p_{\x,\y}(\x, \y) d\x
\end{align*}
Let $\X_i,\;i\in\{1,2,\dots,M\}$ be the regions on which $f$ behaves as a consistent affine function. The regions $\X_i$ are disjoint and span the input space. so we can expand the integral above as: 
\begin{align*}
        P(\y < \boldsymbol{\phi}) &= \sum_{i=1}^{M} \int_{-\boldsymbol{\infty}}^{\boldsymbol{\phi}} \int_{\X_i} p_{\x, \y}(\x, \y) d\x d\y
\end{align*}
where the order of sums and integrals can be exchanged by the Fubini-Tonelli theorem. In the expression above we note that:
\begin{align*}
    \int_{-\boldsymbol{\infty}}^{\boldsymbol{\phi}} \int_{\X_i} p_{\x, \y}(\x, \y) d\x d\y &= P[(\y < \boldsymbol{\phi}) \cap (\x \in \X_i)] \\
    &= P\left[(\C_{L}\x + \mathbf{d}_{L} < \boldsymbol{\phi}) \cap \left(\lC_{\bzeta_i}^{(L-1)}\x + \ld^{(L-1)}_{\bzeta_i} \in \mathcal{O}(\bzeta)\right)\right]
\end{align*}
where $\bzeta_i$ is the activation pattern corresponding to region $\mathcal{X}_i$, and $\C_{L}, \mathbf{d}_{L}$, as defined in~\eqref{eq:C}-\eqref{eq:d}, depend on $\bzeta_i$. Augmenting $\C_{L}, \mathbf{d}_{L}$ to $\lC_{\bzeta_i}^{(L-1)}, \ld^{(L-1)}_{\bzeta_i}$ respectively yields the block matrix and block vector $\lC_{\bzeta_i}^{(L)},\;\ld^{(L)}_{\bzeta_i}$. Defining $\widetilde{\bmu}_{k, \bzeta'}$ and $ \widetilde{\bSigma}_{k, \bzeta'}$ as in \Cref{prop:pdf} and assuming $\widetilde{\bSigma}_{k, \bzeta'}$ is invertible, we can express the previous probability as:
\begin{align*}
    &P\left[(\C_{L}\x + \mathbf{d}_{L} < \boldsymbol{\phi}) \cap \left(\lC_{\bzeta_i}^{(L-1)}\x + \ld^{(L-1)}_{\bzeta_i} \in \mathcal{O}(\bzeta)\right)\right] =\\ &\qquad\qquad\qquad\qquad\qquad\sum_{k=1}^K \alpha_k \int_{-\boldsymbol{\infty}}^{\boldsymbol{\phi}} \int_{\mathcal{O}(\bzeta')}  
    \N\left(\begin{bmatrix}
        \x\\
        \y
    \end{bmatrix}; \widetilde{\bmu}_{k, \bzeta'}, \widetilde{\bSigma}_{k, \bzeta'}\right) d\x
\end{align*} The density in~\eqref{eq:pdf_y} then follows from the above by the fundamental theorem of calculus.
\end{proof}
In \Cref{prop:pdf}, $\widetilde{\bmu}_{k,\bzeta'}$ and $\widetilde{\bSigma}_{k,\bzeta'}$ are defined using $\lC_{\bzeta'}^{(L)}$ and $\ld_{\bzeta'}^{(L)}$, whereas in \Cref{prop:pmf} they were defined using $\lC_{\bzeta'}^{(L-1)}$ and $\ld_{\bzeta'}^{(L-1)}$. If the covariances $\widetilde{\bSigma}_{k,\bzeta'}$ are approximated as diagonal matrices, the integral can be efficiently evaluated as a product of terms involving the error function. Additionally, approximating each covariance as the sum of a diagonal and low rank matrix can significantly reduce the dimensionality of the integration to the rank of the low rank approximating matrix \cite{genz2009computation}. 

As in~\eqref{eq:pmf}, the expression derived in~\eqref{eq:pdf_y} demonstrates that the density of $\y$ can be expressed as a sum of Gaussian orthant probabilities. While direct computation of the density of $\y$ is possible, it requires identifying the appropriate region for integration, namely the intersection of the convex polytope $\mathcal{K}(\bzeta')$ and the preimage of $f(\y)$ \cite{krapf2024piecewise}. Hence, the orthant formulation in~\eqref{eq:pdf_y} provides an attractive alternative. Moreover, the formulation presented in \Cref{prop:pdf} reduces the dimensionality of the integral when $\sum_{\ell=1}^L n_{\ell} < n_0$. 

\section{Approximating the support of $\bzeta$}
\label{sec:approximation}
The expression for the density of $\y$ given in~\eqref{eq:pdf_y} depends on $\mathrm{supp}(\bzeta)$, the set of activation patterns that occur with nonzero probability when the inputs are distributed as a mixture of Gaussians. Therefore, $|\mathrm{supp}(\bzeta)|$ is bounded above by the total number of affine regions into which the network partitions the input space. Determining the total number of affine regions, either exactly for shallow networks or using bounds for deeper networks, has been a subject of ongoing research \cite{pascanu2013number, montufar2014number, telgarsky2015representation, raghu2017expressive, serra2018bounding, hu2020analysis, piwek2023exact, goujon2024number}. 

Generally, the maximum number of affine regions in deep ReLU networks scales polynomially with the size of the hidden layers, and exponentially with respect to the number of layers. However, in practice, it has been observed that ReLU networks learn functions with significantly fewer regions than the theoretical limit \cite{hanin2019complexity, hanin2019deep, serra2020empirical}, thereby suggesting that the parameters allowing a ReLU network to achieve the theoretical upper bound on the number of affine regions occupy only a small fraction of the global parameter space \cite{patel2024local}. The question arises: given a distribution of inputs, how can we identify a subset of neural network activation patterns, and thus affine functions, that occur with high probability? 

For neural networks with a large number of hidden neurons, estimating the joint probability distribution over the activation patterns from samples is intractable due to the curse of dimensionality. We provide a sample-free method in \Cref{alg:mainalg}, which relies on the subroutines outlined in \Cref{alg:getparameters} and \Cref{alg:getpatterns}. We describe the intuition of the procedure in this section, which leverages recent observations that some neurons exhibit more randomness in their activations, while others behave effectively deterministically \cite{whitaker2023synaptic}. Hence, we can effectively prune neurons at each layer that exhibit low entropy by computing the probabilities $P(\h_{\ell, j} > 0)$ where $\h_{\ell, j}$ denotes entry $j$ of the feature $\h_\ell$. These probabilities represent the likelihood that neuron $j$ at layer $\ell$ will be active. If the neuron is inactive or active with high probability, we can respectively set the corresponding entry $j$ in $\z_{\ell}$ to 0 or 1. Repeating this procedure across all layers in the network enables us to significantly reduce the number of activation patterns considered. 

We examine the procedure in detail for the network in~\eqref{eq:ff_nn1}-\eqref{eq:ff_nn3} with $L=4$, meaning the network has three ReLU nonlinearities. Letting the input to the network $\x$ be distributed as a mixture of Gaussians with parameters $\{\alpha_k, \bmu_k, \bSigma_k\}_{k=1}^K$, we compute the parameters of the distribution corresponding to $\h_{1}$ as $\{\alpha_k, \W_{1}\bmu_k + \mathbf{b}_1, \W_{1}\bSigma_k\W_{1}^\top\}_{k=1}^K$. Next, we compute the marginal probabilities that neuron $j$ is active in layer 1, namely the probabilities $P(h_{1,j} > 0)$. These marginal probabilities can be efficiently computed in terms of univariate Gaussian CDF $\Phi$ as:
\begin{align}
    P(h_{1,j} > 0) &= \sum_{k} \alpha_k\Phi\left(\frac{\mu_k}{\sigma_{k,j}}\right) \label{eq:marginal_prob}
\end{align}
where $\mu_{k,j}$ and $\sigma_{k,j}^2$ are the mean and variance of $h_{1,j}$. We use the computed marginal probabilities to identify the neurons with entropy larger than predefined threshold. For all other neurons, we deterministically set them to be 1 or 0 depending on the value of $P(h_{1,j} > 0)$. This approach, as outlined
in \Cref{alg:getpatterns}, reduces the number of activation patterns considered at the first layer from $2^{n_1}$ to $2^{\tau_1}$, where $\tau_1$ is the number of neurons with entropy larger than $\tau$. 

Collecting the  $2^{\tau_1}$ activation patterns at layer $1$ into the set $\mathcal{Z}_1$, we can identify the corresponding activation patterns at layer 2 by conditioning on the activation patterns in $\mathcal{Z}_1$. For each $\z_1 \in \mathcal{Z}_1$, we compute the parameters $\{\alpha_k, \W_{2}\diag[\z_{1}](\W_{1}\bmu_k + \mathbf{b}_1) + \mathbf{b}_2, \W_{2}\diag[\z_{1}]\W_{1}\bSigma_k\W_{1}^\top\diag[\z_{1}]^\top\W_{2}^\top\}_{k=1}^K$ of the conditional distribution $\h_{1} | \z_{1}$ using \Cref{alg:getparameters}. As before, we then compute the marginal probabilities and identify the neurons with highest entropy, considering only the corresponding activation patterns.

By conditioning on the activation patterns from the earlier layers, we can continue identifying the corresponding activation patterns at the subsequent layers of the network, and approximately identify a highly probable subset of activation patterns. The algorithm presented in \Cref{alg:getpatterns} can be viewed as a branch-and-bound method for exploring the activation pattern state space. At each layer of the search, the bounding function is given by:
\begin{align*}
    P(\mathbf{z}_\ell) \leq \min_{i \in \{1, \dots, n_{\ell}\}} P(z_{\ell,i})
\end{align*}
Branches are pruned if the minimum marginal probability across all neurons $z_{\ell, i}$ is sufficiently small, indicating that certain neurons are effectively deterministic.

\begin{algorithm}[htpb]
\small
\caption{Estimate support}
\label{alg:mainalg}
\begin{algorithmic}[1]
\REQUIRE Number of layers $L$, input distribution parameters $\{\alpha_k, \bmu_k, \bSigma_k\}_{k=1}^K$,  thresholds $\{\tau_\ell\}_{\ell = 1}^{L-1}$
\ENSURE Representative activation patterns in curList
\STATE Initialize $\text{curList} = [\;]$
\STATE $\mathcal{Z}$ = $\mathit{GetPatterns}\left(\{\alpha_k, \W_1\bmu_k + \mathbf{b}_1, \W_1\bSigma_k\W_1^\top\}_{k=1}^K, \tau_1\right)$
\FOR{$\z \in \mathcal{Z}$}
    \STATE Append list $[\z]$ to $\text{curList}$
\ENDFOR
\STATE Initialize $\text{nextList} = [\;]$
\FOR{$\ell = 2$ to $L - 1$}
    \FOR{each $\text{subList}$ in \text{curList}}
        \STATE $\C, \mathbf{d} = \mathit{GetParams}(\text{subList})$
        \STATE $\mathcal{Z} = \mathit{GetPatterns}\left(\{\alpha_k, \C\bmu_k + \mathbf{d}, \C\bSigma_k\C^\top\}_{k=1}^K, \tau_\ell\right)$
        \FOR{$\z \in \mathcal{Z}$}
            \STATE Append $\z$ to a copy of subList
            \STATE Append resulting list from prior step to nextList
        \ENDFOR
    \ENDFOR
    \STATE $\text{curList} \gets \text{nextList}$
    \STATE $\text{nextList} \gets [\;]$
\ENDFOR
\end{algorithmic}
\end{algorithm}

\begin{algorithm}
\caption{GetParams}
\small
\label{alg:getparameters}
\begin{algorithmic}[1]
\REQUIRE Neural network parameters $\{\W_{\ell}, \mathbf{b}_{\ell}\}_{\ell=1}^{P}$ and activation patterns $\{\z_{\ell}\}_{\ell=1}^{P-1}$ up to layer $P$
\ENSURE $\C = \C_{P}, \mathbf{d}=\mathbf{d}_P$ with $\C_{P}, \mathbf{d}_P$ defined as in~\eqref{eq:C},\eqref{eq:d}
\STATE Initialize $\C = \W_1$
\STATE Initialize $\mathbf{d} = \mathbf{b}_1$ 
\FOR{$\ell = 2$ to $P$}
    \STATE Update $\C \gets \W_{\ell} \diag[\z_{\ell-1}] \C$
    \STATE Update $\mathbf{d} \gets \W_{\ell}\diag[\z_{\ell-1}]\mathbf{d} + \mathbf{b}_{\ell}$
\ENDFOR
\end{algorithmic}
\end{algorithm}

\begin{algorithm}[htpb]
\caption{GetPatterns}
\small
\label{alg:getpatterns}
\begin{algorithmic}[1]
\REQUIRE Distribution parameters $\{\widetilde{\alpha}_k, \widetilde{\bmu}_k, \widetilde{\bSigma}_k\}_{k=1}^K$, threshold $\tau$
\ENSURE Activation patterns $\mathcal{Z}$
\STATE Initialize $\mathbf{s}_k$ as vector of diagonal entries of $\widetilde{\bSigma}_k,\;\forall k$
\STATE Compute $\mathbf{p}$ with $p_i\!=\!\sum_{k=1}^K\!\alpha_k\Phi\left(\widetilde{\mu}_{k,i}/{s_{k,i}}\right)$
\STATE Find $\mathcal{I}$: set of indices of $\mathbf{p}$ with entropy less than $\tau$
\FOR{$i \in \mathcal{I}$}
\STATE $v_i = \begin{cases}
    1 \text{ if } |1 - p_i| < |p_i|\\
    0 \text{ otherwise}
\end{cases}$
\ENDFOR
\STATE $\mathcal{Z} = \{\z \in \{0,1\}^{\mathrm{dim}(\mathbf{p})} : z_i = v_i,\; \forall i \in \mathcal{I}\}$
\end{algorithmic}
\end{algorithm}

\newpage 
\section{Experiments}
\subsection{Affine Function Distribution}
We first use the moons dataset to validate our expressions in~\eqref{eq:pmf} and~\eqref{eq:pdf_y} corresponding respectively to the PMF of the activation pattern and PDF of the outputs of the neural network. We train the neural network described in~\eqref{eq:ff_nn1}-\eqref{eq:ff_nn3}, with $L=4$ and $n_1 = n_2 = n_3 = 4$ neurons at each hidden layer. The number of neurons is deliberately small in this experiment to permit exhaustive enumeration of all activation patterns. We provide the optimization parameters of the training procedure in \Cref{app:experiment_configurations}. 

We start by considering the impact of Gaussian noise on the neural network by randomly selecting 2 test samples, $\x_1, \x_2$. For each test sample $\x_i$, we numerically compute the PMF of the network's activation pattern and the CDF of the network's output corresponding to the isotropic Gaussian input distribution $\N(\x_i, \sigma^2 \mathbf{I})$. The results are shown in \Cref{fig:moons_isotropic}, where the lines in blue indicate the probabilities computed by numerically integrating ~\eqref{eq:pmf} and~\eqref{eq:pdf_y}, and the red lines indicate the probabilities estimated using 1 million Monte Carlo trials. In the right column of \Cref{fig:moons_isotropic}, we observe jump discontinuities in the CDF of the neural network output $y$. These discontinuities indicate that $y$, which by \Cref{prop:trunc_gaussians} is distributed as mixture of truncated Gaussians, is truncated over disjoint intervals. Overall, we observe excellent agreement between the numerically computed probabilities and those obtained via Monte Carlo simulation.

\begin{figure*}[htpb]
    \centering
    \begin{subfigure}[c]{0.32\textwidth}
        \includegraphics[width=\linewidth]{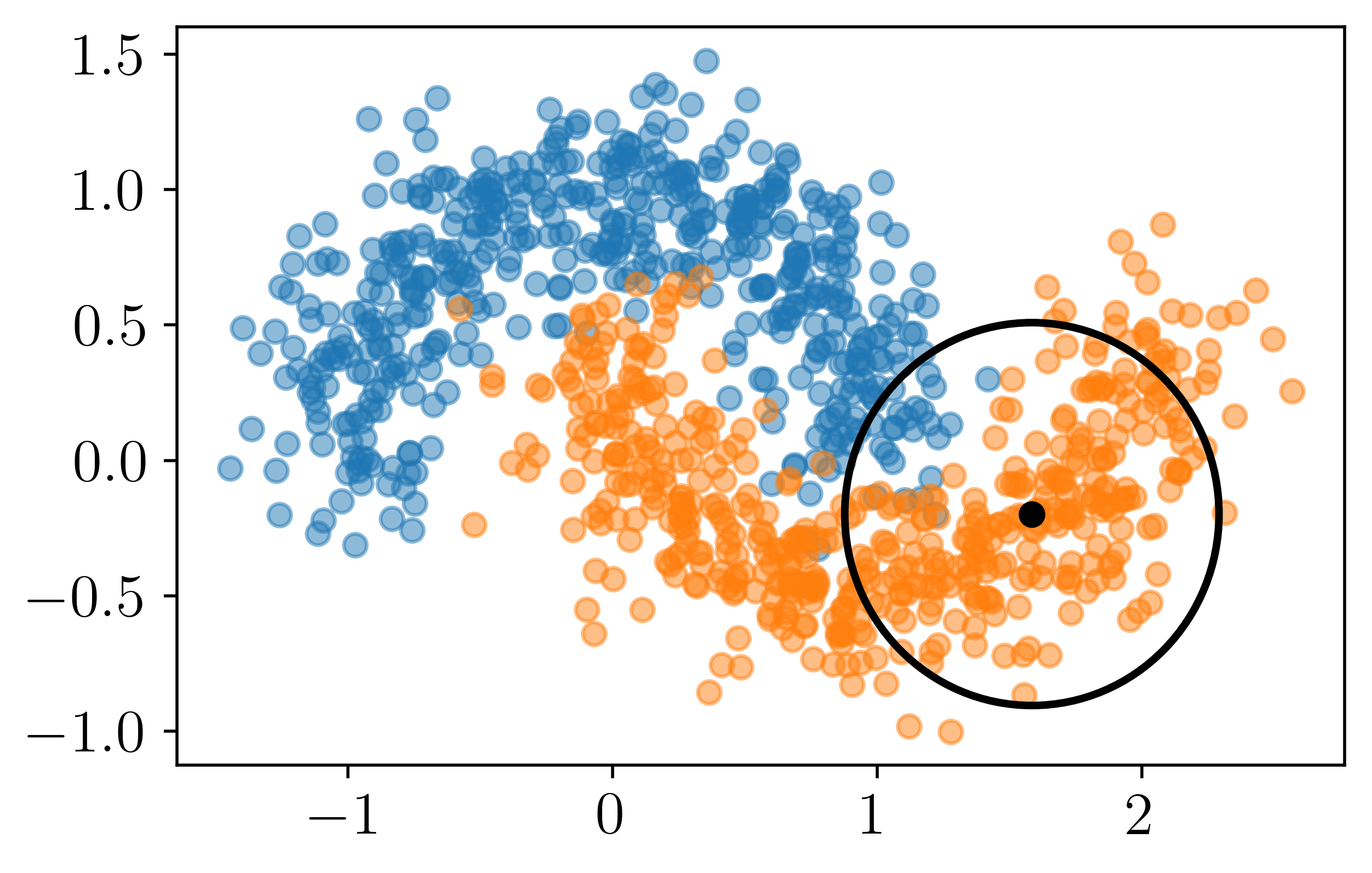}
    \end{subfigure}
    \hfill
    \begin{subfigure}[c]{0.32\textwidth}
        \includegraphics[width=\linewidth]{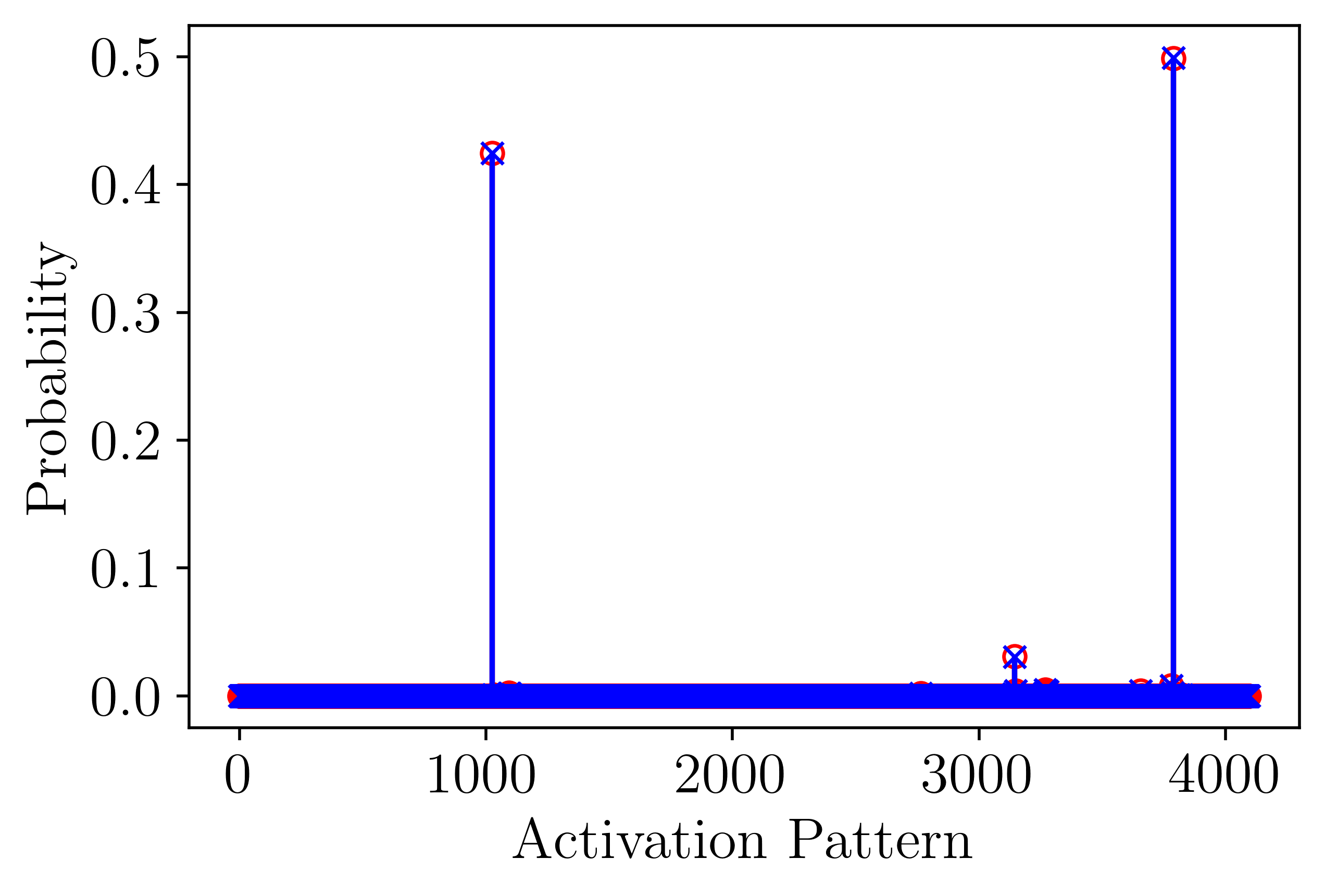} 
    \end{subfigure}
    \hfill
    \begin{subfigure}[c]{0.32\textwidth}
        \includegraphics[width=\linewidth]{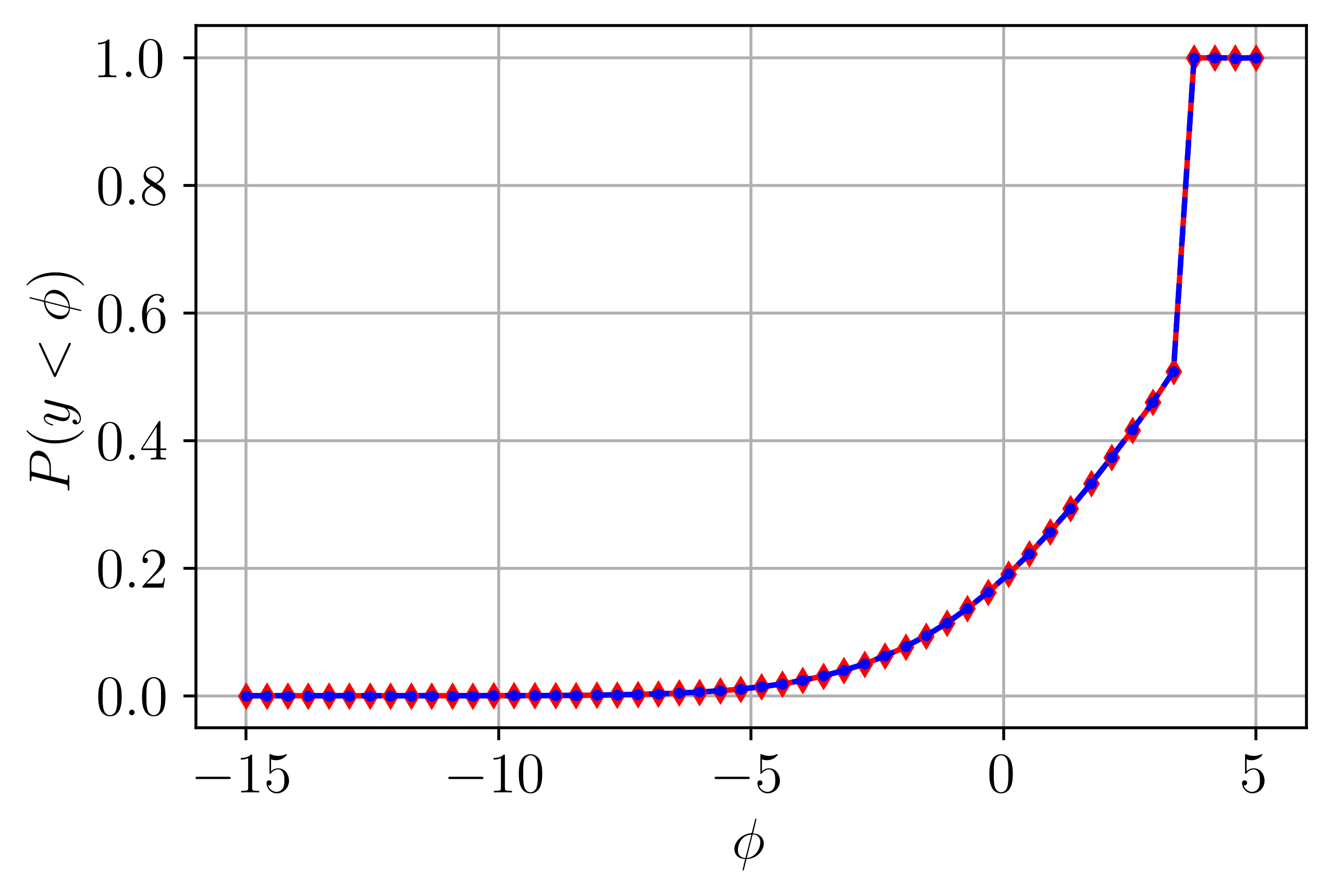} 
    \end{subfigure}
    {\small (a) $\x_1, \sigma^2 = 0.5$ (Blue lines are numerically computed, red lines are Monte Carlo estimates)}\\ \vspace{0.5em}
    
    \begin{subfigure}[c]{0.32\textwidth}
        \includegraphics[width=\linewidth]{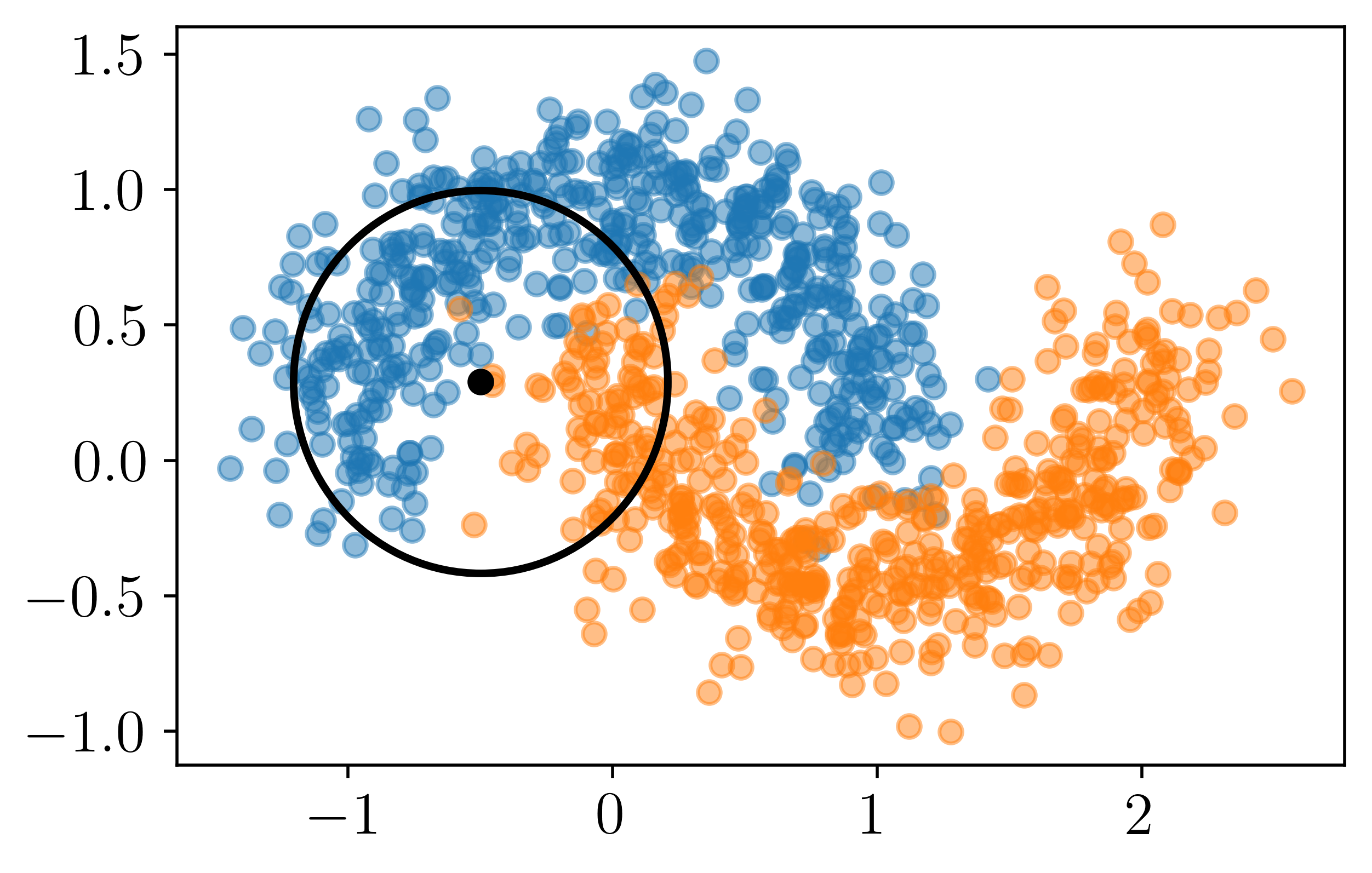} 
    \end{subfigure}
    \hfill
    \begin{subfigure}[c]{0.32\textwidth}
        \includegraphics[width=\linewidth]{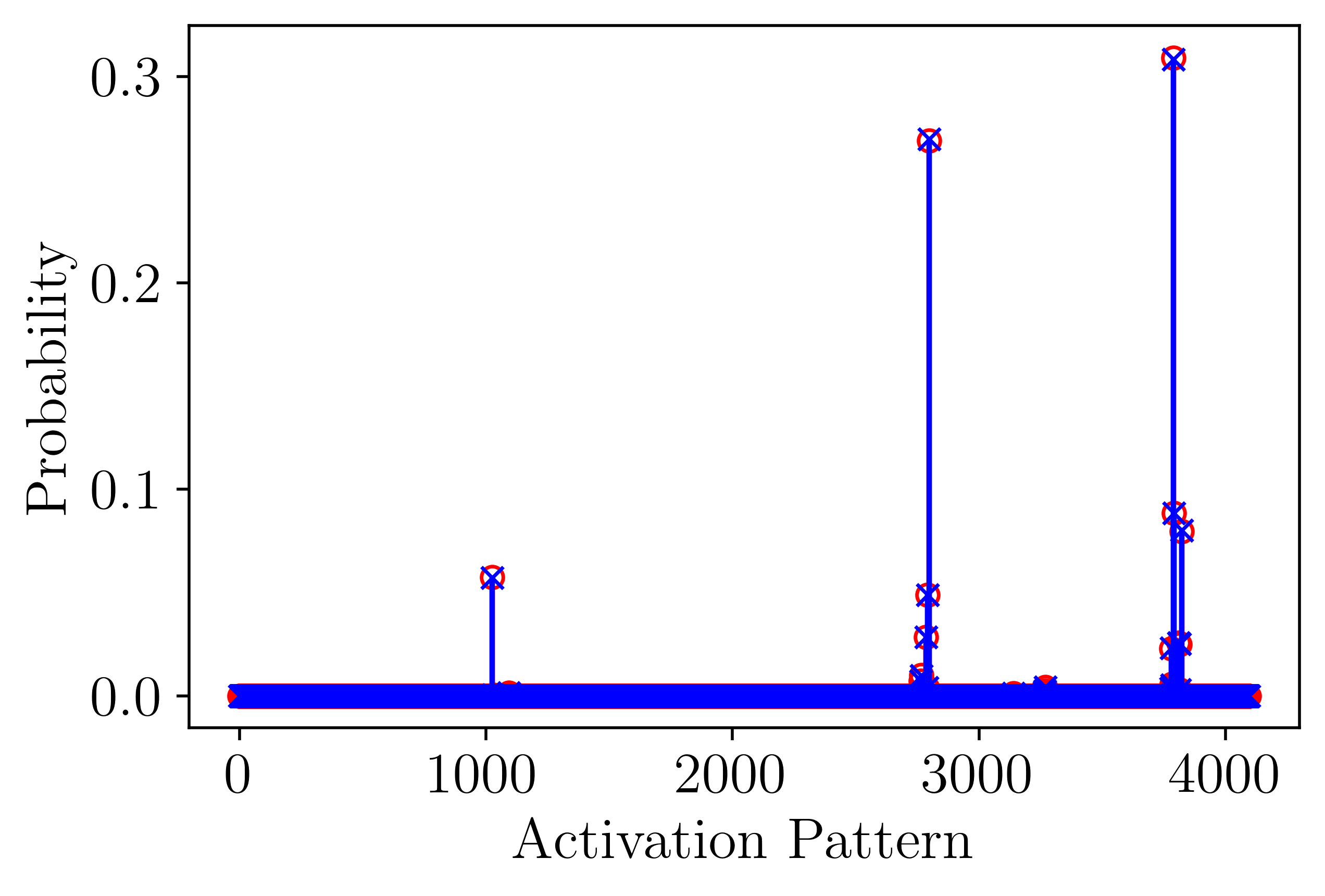} 
    \end{subfigure} 
    \hfill
    \begin{subfigure}[c]{0.32\textwidth}
        \includegraphics[width=\linewidth]{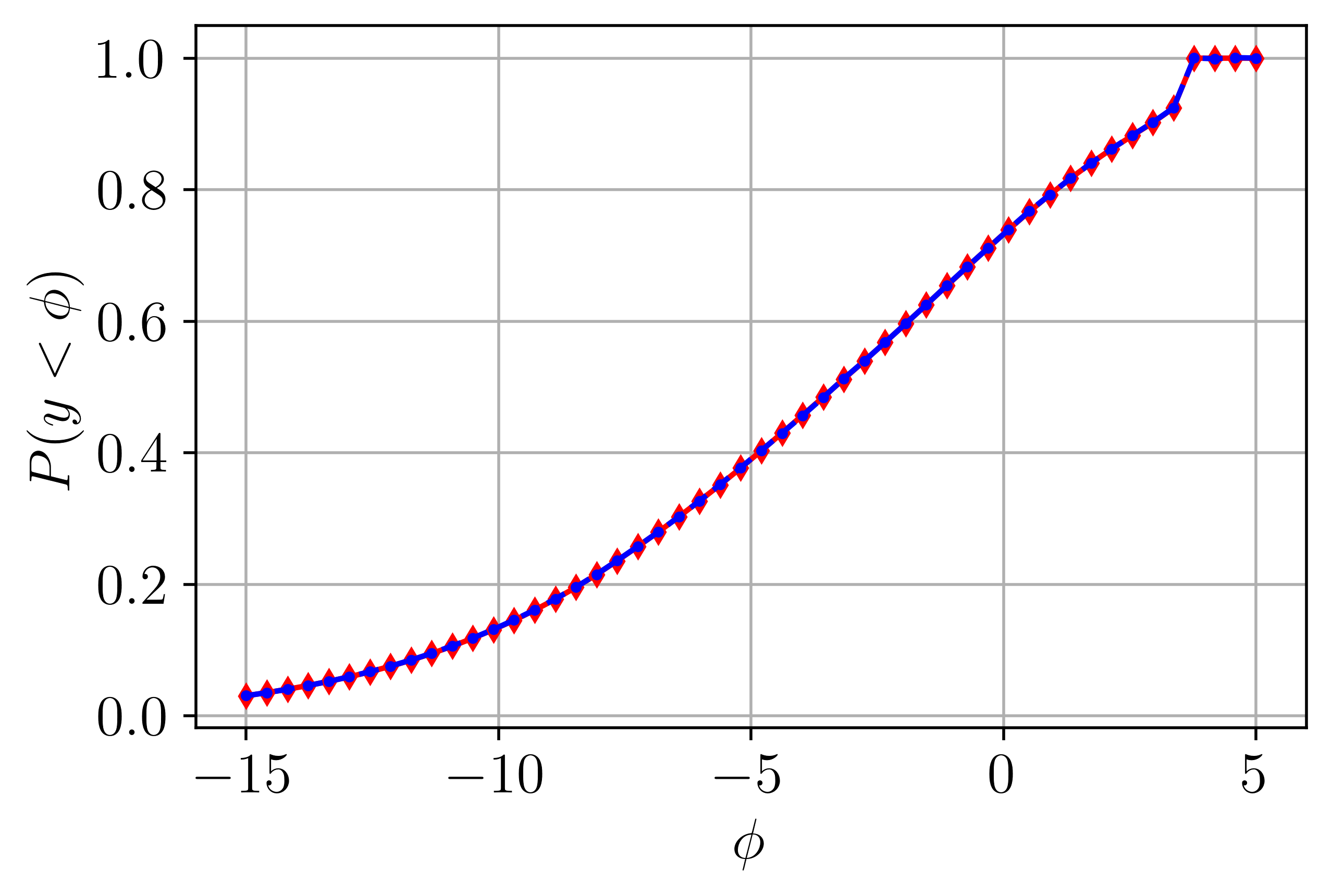} 
    \end{subfigure}
   {\small (b) $\x_2, \sigma^2 = 0.5$ (Blue lines are numerically computed, red lines are Monte Carlo estimates)}
    
    \caption{Effect of Gaussian noise on affine function distribution and output distribution. Blue lines are the numerically computed expressions and red lines are Monte Carlo estimates. [Left Column]: Gaussian distribution input to the network. [Middle Column]: Resulting distribution over activation patterns (affine functions). Each binary activation pattern is denoted by its decimal representation. [Right Column]: CDF of the network output.}
    \label{fig:moons_isotropic}
\end{figure*}

Next we consider the PMF of the activations and CDF of the outputs over the entire moons dataset. We fit a separate Gaussian mixture model to the data points of each class of the moons dataset. Each Gaussian mixture consists of 3 components with diagonal covariances. For each Guassian mixture, we numerically compute the PMF of the activations and the CDF of the outputs. 

The results are shown in \Cref{fig:moons_gmm}. The red lines indicate the probabilities computed using 1 million Monte Carlo trials over the moons dataset, while the blue lines show the numerically computed probabilities using the learned Gaussian mixtures. 

\begin{figure}[htpb]
    \centering
    \begin{subfigure}[t]{0.85\textwidth} 
        \centering
        \begin{subfigure}[t]{0.4\textwidth}
            \includegraphics[width=\textwidth]{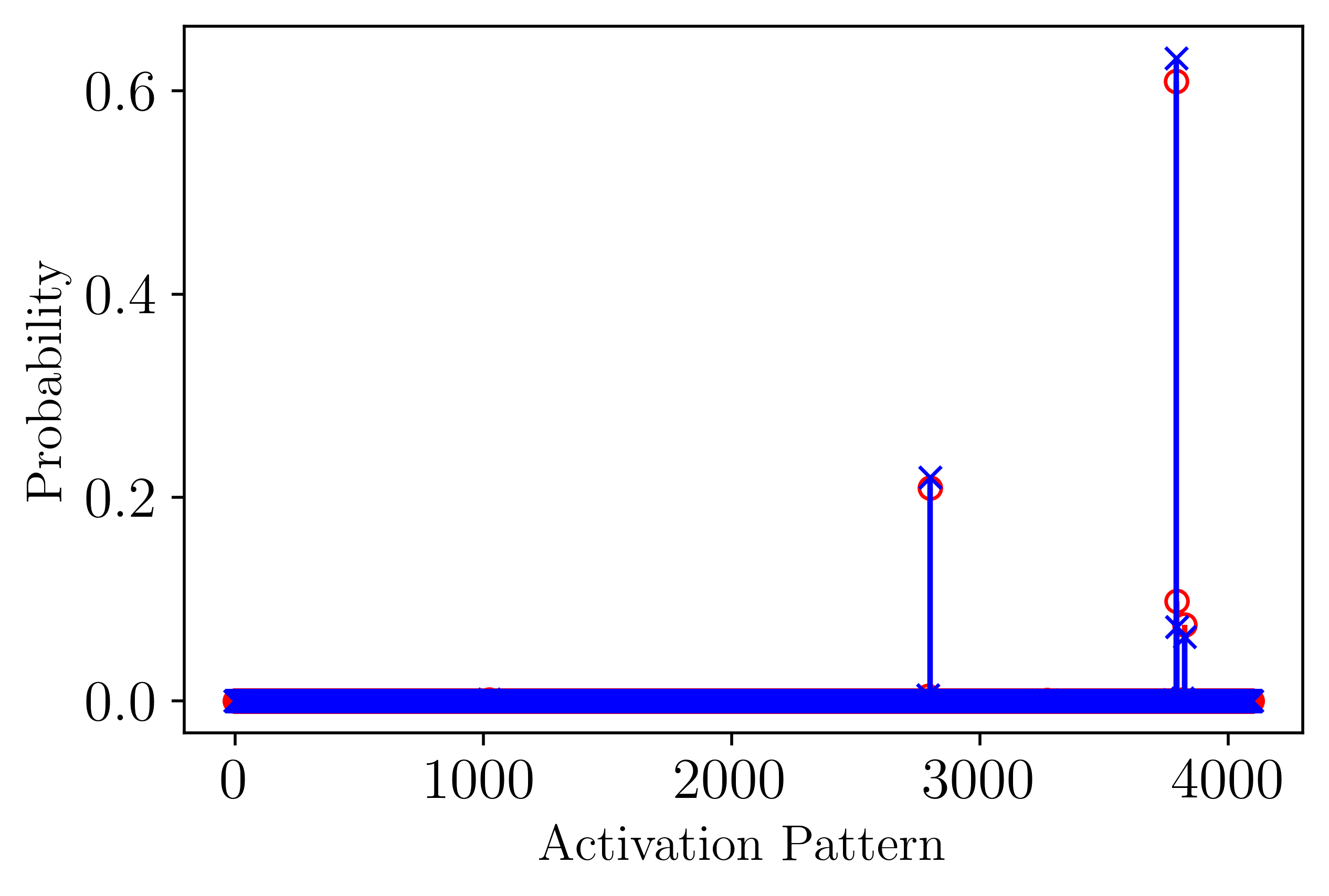}
        \end{subfigure}
        \hspace{2em}
        \begin{subfigure}[t]{0.4\textwidth}
            \includegraphics[width=\textwidth]{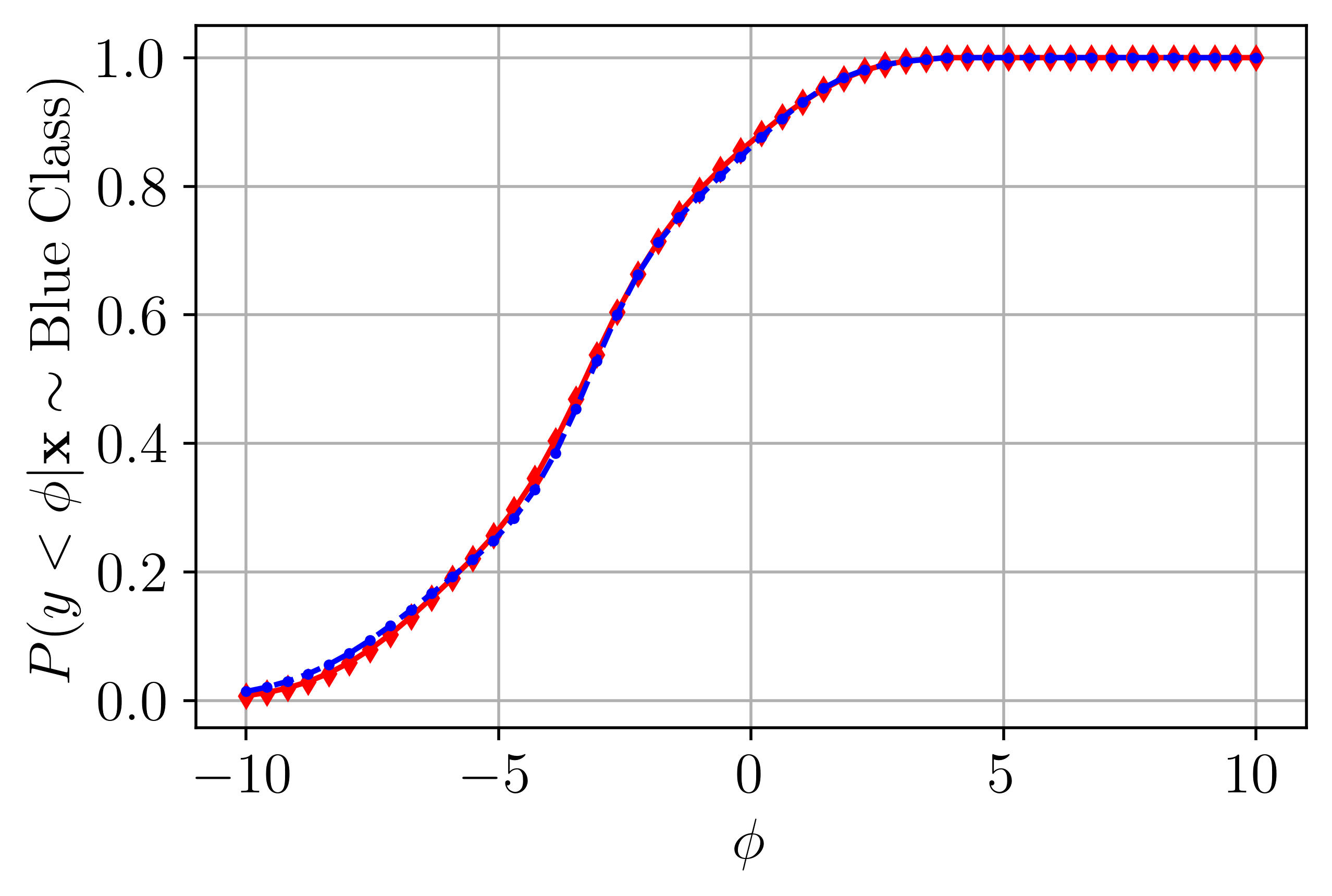}
        \end{subfigure}
        \caption*{(a) Blue Class}
    \end{subfigure}
    \\
    \begin{subfigure}[t]{0.85\textwidth}
    \centering 
    \begin{subfigure}[t]{0.4\textwidth}
        \includegraphics[width=\textwidth]{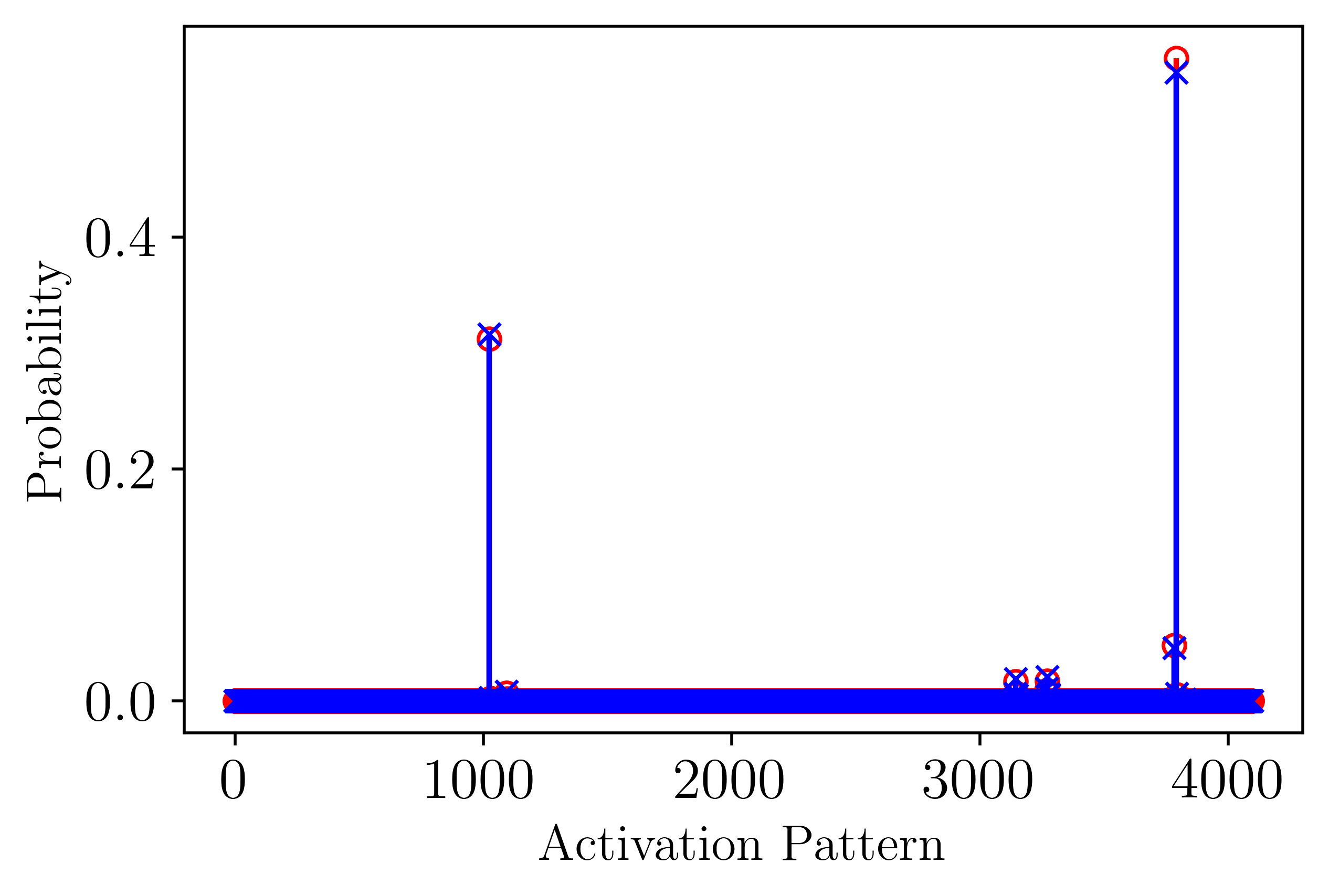}
    \end{subfigure}
    \hspace{2em}
    \begin{subfigure}[t]{0.4\textwidth}
        \includegraphics[width=\textwidth]{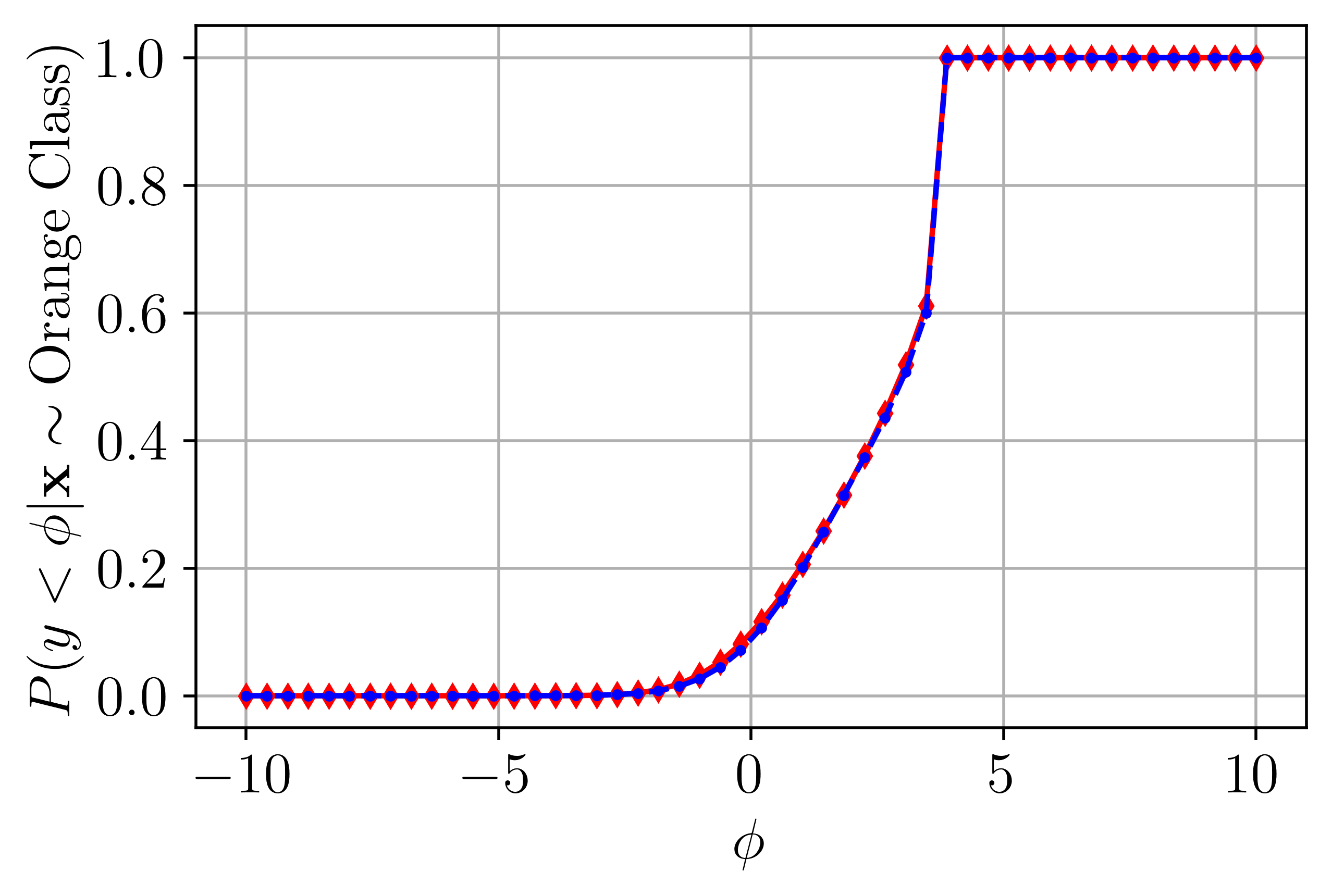} 
    \end{subfigure}    
    \caption*{(b) Orange Class}
    \end{subfigure}
    \caption{PMF of affine functions and CDF of outputs corresponding to Gaussian mixture models fit on: [Left 2] blue class and [Right 2] orange class}
    \label{fig:moons_gmm}
\end{figure}

We observe the numerically computed PMF of the activation patterns usinged the learned Gaussian mixtures closely matches the PMF of activation patterns over the entire moons dataset. The numerically computed false positive and false negative rates 14.49\% and 8.76\%, corresponding to the tail probabilities of the CDF plots in \Cref{fig:moons_gmm}, closely align to the values 13.10\% and 9.78\% computed with Monte Carlo simulation. 

\subsection{Support Estimation}
We validate \Cref{alg:mainalg} to identify the most probable activation patterns for a neural network trained to classify the MNIST dataset \cite{lecun1998mnist}. We train the ReLU network in~\eqref{eq:ff_nn1}-\eqref{eq:ff_nn3} with $L = 4$ and $n_{1} = n_{2} = n_{3} = 16$ neurons per hidden layer. The training details are provided in \Cref{app:experiment_configurations}. For each class, we compute the sample mean and covariance of the vectorized images over the training data. The classes are all modeled as Gaussians with the computed statistics, and we estimate the support of the activation patterns for each class using \Cref{alg:mainalg}. The algorithm returns a set of the most probable activation patterns for each class. We then compute the proportion of activation patterns over the test data that lie in each of the estimated activation sets.

The results are summarized in Table \ref{tab:activation_pattern} below. The thresholds $\tau$ in the table heading are reported in terms of the probability difference from 0.5 rather than entropy. For example, a threshold of 0.1 in the table indicates that neurons with marginal probabilities exceeding 0.6 are fixed to 1, while those below 0.4 in are fixed to 0 in \Cref{alg:mainalg}. A threshold of 0.1 therefore corresponds to deterministically fixing all neurons with entropy below 0.92 bits. In our implementation, we apply a uniform threshold across all layers and limit the cardinality of the index set $\mathcal{I}$ in \Cref{alg:getpatterns} to at most 10 to prevent excessive branching.

Each entry in the table represents the proportion of activation patterns, computed over the test data, that fall within the activation set derived from the sample mean and covariance of the corresponding class and threshold value. Effectively, the table gives estimates of the probability that an affine function applied to a given class comes from the set of affine functions estimated by \Cref{alg:mainalg}. Thus, as the set of activation patterns expands with increasing $\tau$, the probability that the estimated activation set will include an affine function induced by a test sample also increases. 

\begin{table}[ht]
\caption{Success probability of estimated activation pattern sets}
\label{tab:activation_pattern}
\vskip 0.15in
\begin{center}
\begin{small}
\begin{tabular}{ccccc}
\toprule
Class & $\tau=0.1$ & $\tau=0.2$ & $\tau=0.3$ & $\tau=0.4$ \\
\midrule
0 & 0.07 & 0.14 & 0.37 & 1.00\\
1 & 0.04 & 0.18 & 0.70 & 0.85\\
2 & 0.06 & 0.16 & 0.39 & 0.63\\
3 & 0.04 & 0.16 & 0.34 & 1.00\\
4 & 0.01 & 0.16 & 0.38 & 0.74\\
5 & 0.09 & 0.12 & 0.39 & 1.00\\
6 & 0.06 & 0.13 & 0.35 & 0.69\\
7 & 0.03 & 0.31 & 0.41 & 1.00\\
8 & 0.16 & 0.18 & 0.45 & 0.65\\
9 & 0.08 & 0.24	& 0.60 & 0.76\\
\bottomrule
\end{tabular}
\end{small}
\end{center}
\vskip -0.1in
\end{table}

\subsection{Singular value distribution}
We demonstrate that our expressions for the probability of ReLU network realizing an affine function can be applied to analyze the distribution of the singular values of its Jacobian. We train neural networks to classify MNIST and Fashion MNIST, after which we inject noise at given test samples. We then compute the expected activation pattern, and subsequently the expected Jacobian for the networks with inputs given by the noisy distribution.  

For a small ReLU network, computing the singular value distribution \textit{exactly} is feasible by enumerating all affine regions. Here however, we use same ReLU network from the prior experiment, with $L=4$ layers and 32 neurons at each layer. We leverage \Cref{alg:mainalg} with threshold $\tau = 0.25$ to approximately identify the support of the activation pattern corresponding to each noisy distribution. We compare the estimated singular value distribution to that computed using 2,000 Monte Carlo trials. The singular value distributions are computed using 100 randomly sampled test points. For each test point $\x_i$, the input distribution $\N(\x_i, \bSigma)$ is considered, where $\bSigma$ is a randomly initialized covariance matrix with a maximum eigenvalue of 1. The results are illustrated in \Cref{fig:svd_distr} below. 

\begin{figure}[htpb]
    \centering
    \begin{subfigure}[t]{0.4\textwidth}
        \includegraphics[width=\textwidth]{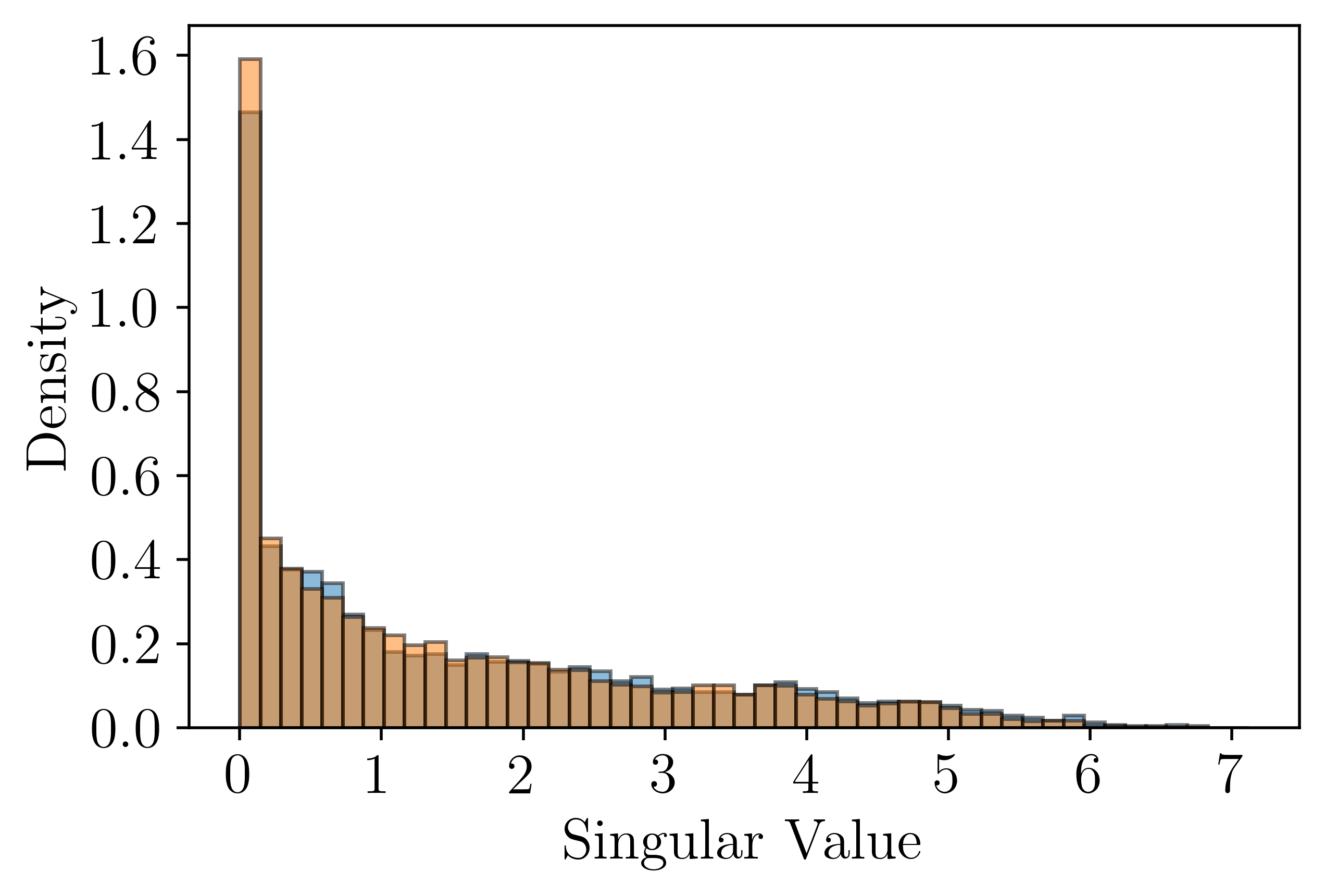} 
        \caption*{(a) MNIST}
    \end{subfigure}
    \hspace{2em} 
    \begin{subfigure}[t]{0.4\textwidth}
        \includegraphics[width=\textwidth]{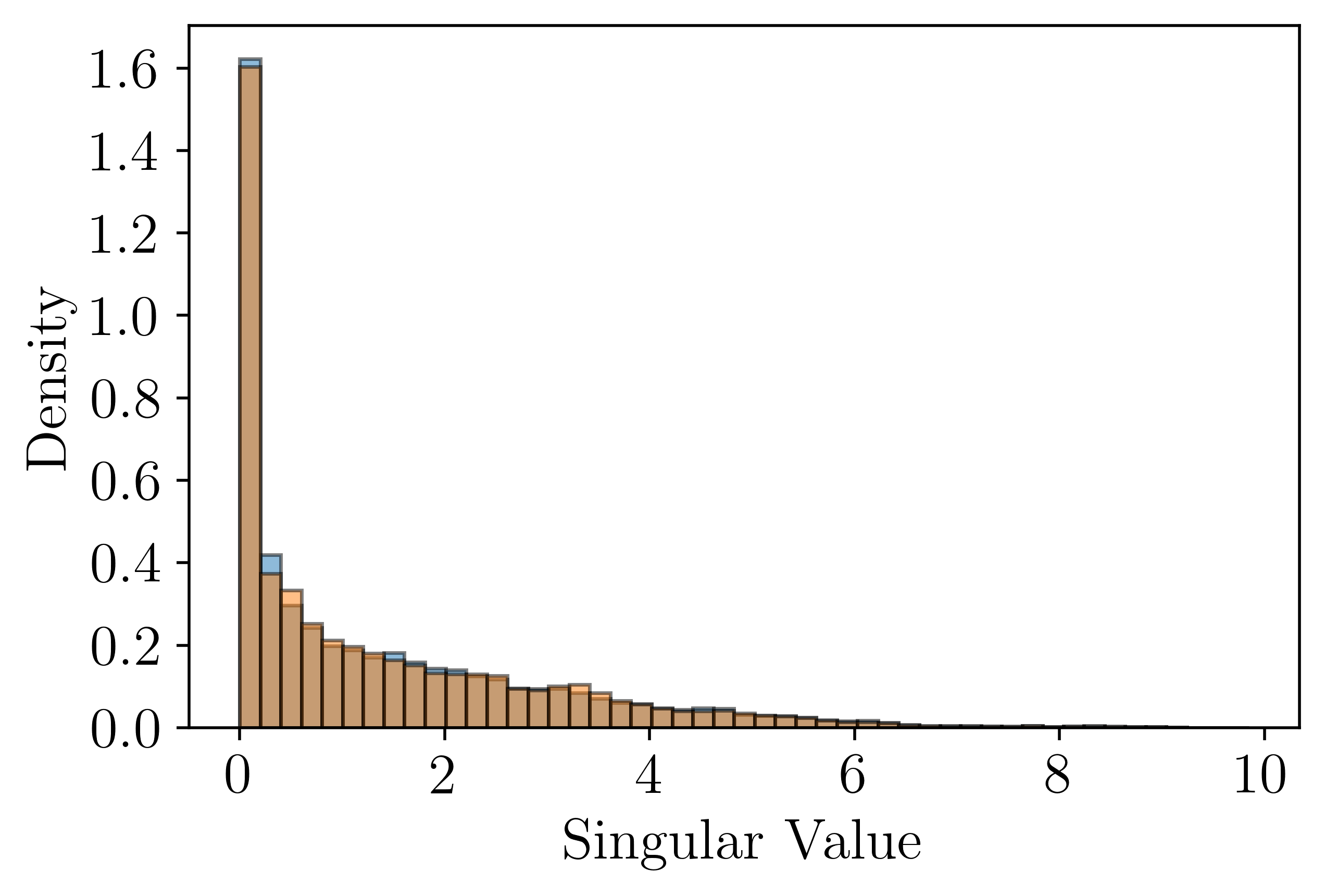} 
        \caption*{(b) Fashion MNIST}
    \end{subfigure}\\ 
    \caption{Singular value distributions. Blue histogram is for numerically computed expression and orange histogram results from Monte Carlo simulation.}
    \label{fig:svd_distr}
\end{figure}

For both datasets we obtain accurate representations of the distribution over the singular values, that even properly characterize the tails of the distributions properly. 

\section{Conclusion}
In this work, we derive the probability that a ReLU function realizes a specific affine transformation from the set of all possible affine functions it can assume. We demonstrate that calculating this probability is equivalent to determining the likelihood of a given activation pattern occurring. We provide explicit, numerically tractable expressions for computing these probabilities and validate their accuracy through Monte Carlo simulations. Additionally, we show that characterizing the distribution over affine functions allows for the derivation of numerically tractable distributions for the output of a neural network. The applications of our work include providing robustness guarantees for piecewise affine neural networks.

\bibliographystyle{IEEEtran}
\bibliography{mybib}

\appendix
\newpage 
\section{Extension to Leaky ReLU}
\label{app:leakyrelu}
The leaky ReLU activation is defined element-wise as
\begin{align}
    \mathrm{LeakyReLU}(\x)_i = \begin{cases}
        x_i &\text{ if } x_i \geq 0\\
        -\gamma x_i &\text{ otherwise} 
    \end{cases}
\end{align}
where $\gamma > 0$ controls the magnitude of the negative slope. When $\gamma = 0$, the leaky ReLU activation is identical to the ReLU activation. All our results hold for the case of leaky ReLU activations with slight modification. Instead of defining $\z_{\ell} = \1(\h_{\ell})$ as we did for ReLU activations, we can redefine $\z_{\ell} = \pi(\h_{\ell})$ where the function $\pi$ is:
\begin{align}
    \pi(\x)_i &= \begin{cases}
        1 &\text{ if } x_i > 0\\
        -\gamma &\text{otherwise} 
    \end{cases}
\end{align}
Replacing all occurrences of $\1(\cdot)$ in the case of ReLU activations with $\pi(\cdot)$ then enables us to easily generalize the results to the leaky ReLU activation function. The results can indeed generally be extended to any activation function $\sigma(\cdot)$ that is piecewise linear and continuous, by expanding the definition of the function $\pi$ so that $\pi(x_i)$ yields the slope that $\sigma(\cdot)$ applies to $x_i$. For a piecewise linear activation $\sigma(\cdot)$ with $k$ linear knots there will exist $k^{\sum_{\ell=1}^{L}}$ distinct activation patterns. In this case however, \Cref{lemma:polytope} holds only if $\sigma(\cdot)$ applies a unique slope to each piecewise region. For example, if $\sigma(\cdot)$ applies the same slope on multiple disjoint regions, then it can be easily shown that the convex polytopes corresponding to a given activation pattern are not unique. Furthermore, in the case of general piecewise linear activations $\sigma(\cdot)$, the region of integration defined by $\mathcal{O}(\bzeta) = \{\x \in \R^n : \pi(\x) = \bzeta\}$ may not be an orthant of $\R^n$ but will still be a rectangular region.
\newpage

\section{Experiment Details}
\label{app:experiment_configurations}

\subsection{Moons}
\label{app:experiment_configurations_moons}
\begin{table}[ht]
\caption{Moons experiment configuration}
\label{sample-table}
\vskip 0.15in
\begin{center}
\begin{small}
\begin{tabular}{lcccr}
\toprule
Parameter &  \\
\midrule
Training Samples    & 1000  \\
Noise Std Dev & 0.2\\
Optimizer & Adam (momentum parameters $\beta_1 = 0.9, \beta_2 = 0.999$)\\
Batch size & 64\\
Learning rate & 1e-2\\
Epochs & 20\\
\bottomrule
\end{tabular}
\end{small}
\end{center}
\vskip -0.1in
\end{table}
Noise Std Dev represents the standard deviation of zero-mean Gaussian noise used to generate the moons dataset.

\subsection{MNIST}
\label{app:experiment_configurations_mnist}
\begin{table}[ht]
\caption{MNIST experiment configuration}
\label{sample-table}
\vskip 0.15in
\begin{center}
\begin{small}
\begin{tabular}{lcccr}
\toprule
Parameter &  \\
\midrule
Optimizer & Adam (momentum parameters $\beta_1 = 0.9, \beta_2 = 0.999$)\\
Batch size & 128\\
Learning rate & 1e-3\\
Epochs & 10\\
\bottomrule
\end{tabular}
\end{small}
\end{center}
\vskip -0.1in
\end{table}

\end{document}